\newtheorem{theorem}{Theorem}[section]
\newtheorem{lemma}[theorem]{Lemma}
\theoremstyle{remark}
\newtheorem{proposition}[theorem]{Proposition}
\newtheorem{assumption}{Assumption}
\newcommand{\va}{a}
\newcommand{\vb}{{\bm{b}}}
\newcommand{\vw}{w}
\newcommand{\vx}{x}
\newcommand{\vz}{{\bm{z}}}
\newcommand{\vI}{{\bm{I}}}
\newcommand{\vW}{{\bm{W}}}
\newcommand{\Bl}{{\Big |}}
\newcommand{\pr}{{\mathbb{P}}}
\definecolor{gen}{RGB}{0, 0, 255}
\def\R{\mathcal{R}}
\def\l{\ell}
\def\E{\mathbb{E}\,}
\def\de{\overset{\Delta}{=}}
\def\card{\textrm{card}}
\DeclareMathOperator*{\argmin}{arg\,min}
\def\T{{ \mathrm{\scriptscriptstyle T} }}
\DeclareRobustCommand{\robustrefLemComplexity}{%
  \ref{lem:complexity}}%
\DeclareRobustCommand{\robustrefThmLout}{%
  \ref{thm:l1out}}%
\DeclareRobustCommand{\robustrefThmMinimax}{%
  \ref{thm:minimax}}%
\DeclareRobustCommand{\robustrefPropSparse}{%
  \ref{prop_sparse}}%
\DeclareRobustCommand{\robustrefThmSparset}{%
  \ref{thm_sparse}}%
\title{The Efficacy of $L_1$ Regularization in Two-Layer \\ Neural Networks}
\author{Gen Li, Yuantao Gu,
        Jie Ding
\thanks{G. Li and Y. Gu are with the Department of Electronic Engineering, Tsinghua University, Beijing 100086, China.  J.~Ding is with the School of Statistics, University of Minnesota, MN 55414, USA. }
}
\begin{document}

\maketitle

\begin{abstract}
A crucial problem in neural networks is to select the most appropriate number of hidden neurons and obtain tight statistical risk bounds. In this work, we present a new perspective towards the bias-variance tradeoff in neural networks. As an alternative to selecting the number of neurons, we theoretically show that $L_1$ regularization can control the generalization error and sparsify the input dimension. In particular, with an appropriate $L_1$ regularization on the output layer, the network can produce a statistical risk that is near minimax optimal. Moreover, an appropriate $L_1$ regularization on the input layer leads to a risk bound that does not involve the input data dimension. Our analysis is based on a new amalgamation of dimension-based and norm-based complexity analysis to bound the generalization error. A consequent observation from our results is that an excessively large number of neurons do not necessarily inflate generalization errors under a suitable regularization.
\end{abstract}



\begin{IEEEkeywords}
Generalization error; Model complexity; Model selection; Neural Network; Regularization; Statistical risk.
\end{IEEEkeywords}

\vspace{-0.1cm}
\section{Introduction}
\vspace{-0.1cm}

Neural networks have been successfully applied in modeling nonlinear regression functions in various domains of applications. 
A critical evaluation metric for a predictive learning model is to measure its statistical risk bound. 
For example, the $L_1$ or $L_2$ risks of typical parametric models such as linear regressions are at the order of $(d/n)^{1/2}$ for  small $d$~\citep{seber2012linear}, where $d$ and $n$ denote respectively the input dimension and number of observations.
Obtaining the risk bound for a nonparametric regression model such as neural networks is highly nontrivial. It involves an approximation error (or bias) term as well as a generalization error (or variance) term. 
The standard analysis of generalization error bounds may not be sufficient to describe the overall predictive performance of a model class unless the data is assumed to be generated from it. 
For the model class of two-layer feedforward networks and a rather general data-generating process, \citet{barron1993universal,barron1994approximation} proved an approximation error bound of $O(r^{-1/2})$ where $r$ denotes the number of neurons. The author further developed a statistical risk error bound of $O((d/n)^{1/4})$, which is the tightest statistical risk bound for the class of two-layer neural networks up to the authors' knowledge (for $d<n$).
This risk bound is based on an optimal bias-variance tradeoff involving an deliberate choice of $r$. Note that the risk is at a convergence rate much slower than the classical parametric rate.
We will tackle the same problem from a different perspective, and obtain a much tighter risk bound. 

A practical challenge closely related to statistical risks is to select the most appropriate neural network architecture for a particular data domain~\citep{DingOverview}.  
For two-layer neural networks, this is equivalent to selecting the number of hidden neurons $r$.
While a small $r$ tends to underfit,  researchers have observed that the network is not overfitting even for moderately large $r$. 
Nevertheless, recent research has also shown that an overly large $r$ (e.g., when $r>n$) does cause overfitting with high probability~\citep{zhang2016understanding}.
It can be shown under some non-degeneracy conditions that a two-layer neural network with more than $n$ hidden neurons can perfectly fit $n$ arbitrary data, even in the presence of noise, which inevitably leads to overfitting.
A theoretical choice of $r$ suggested by the asymptotic analysis in \citep{barron1994approximation} is at the order of $(n/d)^{1/2}$, and a practical choice of $r$ is often from cross-validation with an appropriate splitting ratio~\citep{DingOverview}. 
An alternative perspective that we advocate is to learn from a single neural network with sufficiently many neurons and an appropriate $L_1$ regularization on the neuron coefficients, instead of performing a selection from multiple candidate neural models.
A potential benefit of this approach is easier hardware implementation and computation since we do not need to implement multiple models separately. 
Perhaps more importantly, this perspective of training enables much tighter risk bounds, as we will demonstrate. 
In this work, we focus on the model class of two-layer feedforward neural networks.

Our main contributions are summarized below. 
First, we prove that $L_1$ regularization on the coefficients of the {output} layer can produce a risk bound $O((d/n)^{1/2})$ (up to a  logarithmic factor)  under the $L_1$ training loss, which approaches the minimax optimal rate. Such a rate has not been established under the $L_2$ training loss so far. The result indicates a potential benefit of using $L_1$ regularization for training a neural network, instead of selecting from a number of neurons.
Additionally, a key ingredient of our result is a unique amalgamation of dimension-based and norm-based risk analysis, which may be interesting on its own right. The technique leads to an interesting observation that 
an excessively large $r$ can reduce approximation error while not increasing generalization error under $L_1$ regularizations. 
This implies that an explicit regularization can eliminate overfitting even when the specified number of neurons is enormous. 
Moreover, we prove that the $L_1$ regularization on the {input} layer can induce sparsity by producing a risk bound that does not involve $d$, where $d$ may be much larger compared with the true number of significant variables.
 
 
%

\textbf{Related work on neural network analysis}.
Despite the practical success of neural networks, a systematic understanding of their theoretical limit remains an ongoing challenge and has motivated research from various perspectives. 
\citet{cybenko1989approximations} showed that any continuous function could be approximated arbitrarily well by a two-layer perceptron with sigmoid activation functions.
\citet{barron1993universal,barron1994approximation} established an approximation error bound of using two-layer neural networks to fit arbitrary smooth functions and their statistical risk bounds.
A dimension-free Rademacher complexity for deep ReLU neural networks was recently developed~\citep{golowich2017size,barron2019complexity}.
Based on a {contraction lemma}, 
a series of norm-based complexities and their corresponding generalization errors are developed~\citep[and the references therein]{neyshabur2015norm}.
Another perspective is to assume that the data are generated by a neural network and convert its parameter estimation into a tensor decomposition problem through the score function of the known or estimated input distribution~\citep{anandkumar2014tensor,janzamin2015beating,ge2017learning,mondelli2018connection}. 
Also, tight error bounds have been established recently by assuming that neural networks of parsimonious structures generate the data. In this direction,  \citet{schmidt2017nonparametric} proved that specific deep neural networks with few non-zero network parameters can achieve minimax rates of convergence. 
\citet{bauer2019deep} developed an error bound that is free from the input dimension,  by assuming 
a generalized hierarchical interaction model. 

\textbf{Related work on $L_1$ regularization}.
The use of $L_1$ regularization has been widely studied in linear regression problems~\citep[Chapter 3]{hastie2009elements}. 
The use of $L_1$ regularization for training neural networks has been recently advocated in deep learning practice. 
A prominent use of $L_1$ regularization was to empirically sparsify weight coefficients and thus compress a network that requires intensive memory usage~\citep{cheng2017survey}. 
The extension of $L_1$ regularization to group-$L_1$ regularization~\citep{yuan2006model} has also been extensively used in learning various neural networks~\citep{han2015learning,zhao2015heterogeneous,wen2016learning,scardapane2017group}.
Despite the above practice, the efficacy of $L_1$ regularization in neural networks deserves more theoretical study. In the context of two-layer neural networks, we will show that the $L_1$ regularizations in the output and input layers play two different roles: the former for reducing generalization error caused by excessive neurons while the latter for sparsifying input signals in the presence of substantial redundancy. 
Unlike previous theoretical work, we consider the $L_1$ loss, which ranks among the most popular loss functions in, e.g., learning from ordinal data~\citep{pedregosa2017consistency} or imaging data~\citep{zhao2016loss}, and for which the statistical risk has not been studied previously.
In practice, the use of $L_1$ loss for training has been implemented in prevalent computational frameworks such as 
Tensorflow
~\citep{abadi2016tensorflow}, 
Pytorch
~\citep{ketkar2017introduction}, and
Keras
~\citep{gulli2017deep}.

\vspace{-0.1cm}
\section{Problem Formulation} \label{sec_background}
\vspace{-0.1cm}

\vspace{-0.1cm}
\subsection{Model assumption and evaluation}
\vspace{-0.1cm}

Suppose we have $n$ labeled observations $\{(\vx_i, y_i)\}_{i=1,\ldots,n}$, 
where $y_i$'s are continuously-valued responses or labels. 
We assume that the underlying data generating model is
$
y_i = f_*(\vx_i) + \varepsilon_i 
$
for some unknown function $f_*(\cdot)$,
where $\vx_i$'s $ \in \mathbb{X} \subset \mathbb{R}^d$ are independent and identically distributed,
 and $\varepsilon_i$'s are independent and identically distributed that is symmetric at zero and 
\begin{align}
	\E (\varepsilon_i^2 \mid \vx_i )  \le \tau^2. \label{eq3}
\end{align}
Here, $\mathbb{X}$ is a bounded set that contains zero,
for example $\{\vx : \|\vx\|_{\infty} \le M\}$ for some constant $M$. 
Our goal is learn a regression model $\hat{f}_n: \vx \mapsto \hat{f}_n(\vx)$ for prediction. 
The $\hat{f}_n$ is obtained from the following form of neural networks
\begin{equation}
\sum_{j = 1}^r a_j\sigma(\vw_j^{\top}\vx + b_j) + a_0, \label{model}
\end{equation}
where $a_0, a_j,b_j \in \mathbb{R}, \vw_j\in \mathbb{R}^d$, $j=1,\ldots,r$, are parameters to estimate.
We let $\va=[a_0,a_1,\ldots,a_r]^\T$ denote the output layer coefficients. 
An illustration is given Figure~\ref{fig_diagram}. 
The estimation is typically accomplished by minimizing the empirical risk
$ 
n^{-1} \sum_{i = 1}^n \l(y_i, f(\vx_i)) 
$, 
for some loss function $l(\cdot)$ plus a regularization term. 
We first consider the $L_1$ regularization at the output layer. In particular, we search for such $f$ by the empirical risk minimization from the function class
\begin{align}
\mathcal{F}_V = \biggl\{f : \mathbb{R}^d \to \mathbb{R} \Bl f(\vx) = \sum_{j = 1}^r a_j\sigma(\vw_j^{\top}\vx + b_j) + a_0, \|\va\|_1 \le V\biggr\} \label{Fv1}
\end{align}
where $V$ is  a constant.
\begin{figure}[tb]
\begin{center}
\centerline{\includegraphics[width=0.5\columnwidth]{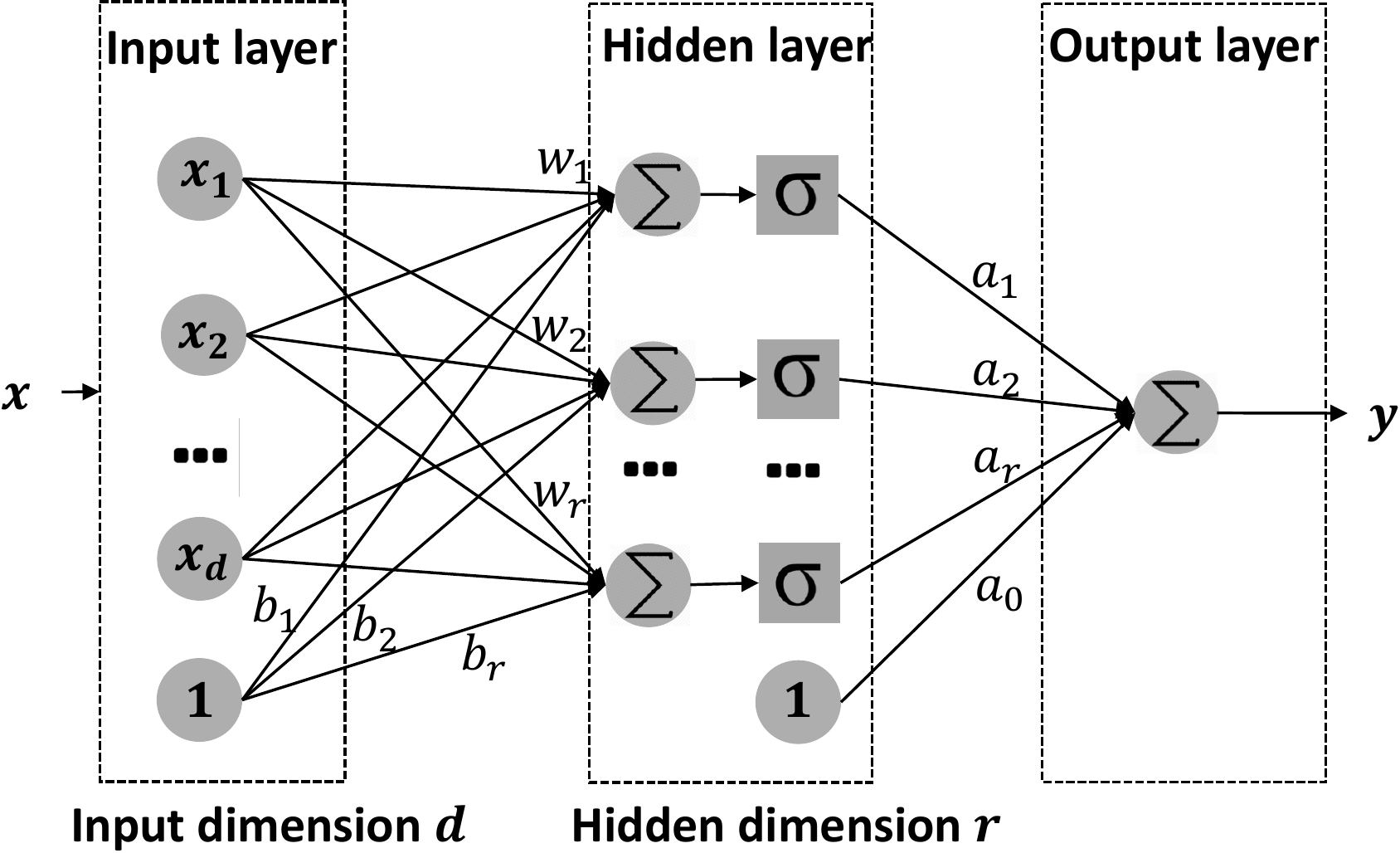}}
\vskip -0.1in
\caption{A graph showing the two-layer neural network model considered in (\ref{model}). }
\label{fig_diagram}
\end{center}
\vskip -0.3in
\end{figure}
The following statistical risk measures the predictive performance of a learned model $f$: 
\begin{align*}
\R(f) \de\E \l(y, f(\vx)) - \E \l(y, f_*(\vx)).
\end{align*}
The loss function $\ell(\cdot)$ is pre-determined by data analysts, usually the $L_1$ loss defined by $\l(y, \tilde{y}) = |y - \tilde{y}|$ or the $L_2$ loss defined by $\l_2(y, \tilde{y}) = (y - \tilde{y})^2$. 
Under the $L_1$ loss, the risk is
$
\R(f) = \E |f_*(\vx) + \varepsilon - f(\vx)| - \E |\varepsilon|,
$
which is nonnegative for symmetric random variables $\varepsilon$.
It is typical to use the same loss function for both training and evaluation.

\vspace{-0.1cm}
\subsection{Notation}
\vspace{-0.1cm}

Throughout the paper, we use $n,d,k,r$ to denote the number of observations, the number of input variables or input dimension, the number of significant input variables or sparsity level, the number of neurons (or hidden dimension), respectively. 
We write $a_n \gtrsim b_n$, $b_n \lesssim a_n$, or $b_n=O(a_n)$, if $|b_n / a_n| < c$ for some constant $c$ for all sufficiently large $n$. We write $a_n \asymp b_n$ if  $a_n \gtrsim b_n$ as well as $a_n \lesssim b_n$.
Let $\mathcal{N}(\bm \mu, V)$ denote Gaussian distribution with mean $\bm \mu$ and covariance $V$.
Let $\|\cdot\|_1$ and $\|\cdot\|_2$ denote the common $L_1$ and $L_2$ vector norms, respectively. 
Let $\mathbb{X}$ denote the essential support of $X$. 
For any vector $\vz \in \mathbb{R}^d$, we define $\|\vz\|_{\mathbb{X}} \de \sup_{\vx \in \mathbb{X}} |\vx^{\top}\vz|$, which may or may not be infinity.
If $\mathbb{X} = \{\vx : \|\vx\|_{\infty} \le M\}$, 
$\|\vz\|_{\mathbb{X}}$ is equivalent to $M \|\vz\|_1$.
Throughout the paper, $\hat{f}_n$ denotes the estimated regression function with $n$ being the number of observations.

\vspace{-0.1cm}
\subsection{Assumptions and classical results}
\vspace{-0.1cm}

We introduce some technical assumptions necessary for our analysis, and state-of-the-art statistical risk bounds built through dimension-based complexity analysis. 

\begin{assumption} \label{ass_activation}
	The activation function $\sigma(\cdot)$ is a bounded function on the real line satisfying $\sigma(x) \to 1$ as $x \to \infty$ and $\sigma(x) \to 0$ as $x \to -\infty$, and it is $L$-Lipschitz for some constant $L$.
\end{assumption}

\begin{assumption} \label{ass_regular}
The regularization constant $V$ is larger than $2C + f_*(0)$,
where $C$ is any constant such that the Fourier transform of $f_*$, denoted by $F$, satisfies 
\begin{align} 
\int_{\mathbb{R}^d} \|\omega\|_{\mathbb{X}} F(d\omega) \le C.\label{eq:smooth}
\end{align}
\end{assumption}

\begin{assumption} \label{ass_inner}
$\sigma(x)$ approaches its limits at least polynomially fast, meaning that $|\sigma(x)-{\bf 1}\{x > 0\}|<\varepsilon$ for all $|x|>x_{\varepsilon}$ where $x_{\varepsilon}$ is a polynomial of $1/\varepsilon$.
Also, the value of $\eta \de\sup_{j} \|\vw_j\|_{\mathbb{X}}$ scales with $n$ polynomially meaning that $\log \eta = O(\log n)$ as $n \rightarrow \infty$.
\end{assumption}

\begin{assumption} \label{ass_activation2}
	There exists a constant $c > 0$ and a bounded subset $\mathcal{S} \subset \mathbb{R}$ such that $\mathbb{P}(X \in \mathcal{S}) > c$ and $\inf_{x \in \mathcal{S}} \sigma'(x) > c$ for $X \sim \mathcal{N}(0, 1)$. 
\end{assumption}

We explain each assumption below.
The above notation of $C,V$ follow those in \citep{barron1993universal,barron1994approximation}.
Assumption~\ref{ass_activation} specifies the class of the activation functions we consider. A specific case is the popular activation function $\sigma(x)=1/\{1+\exp(-x)\}$.
Assumption~\ref{ass_regular},  first introduced in~\citep{barron1993universal}, 
specifies the smoothness condition for $f_*$ to ensure the approximation property of neural networks (see Theorem~\ref{thm:approximation}). 
In Assumption~\ref{ass_inner}, 
the condition for $\vw$ is for technical convenience. 
It could also be replaced with the following alternative condition:
There exists a constant $c > 0$ such that the distribution of $\vx$ satisfies 
$$\sup_{\vw: \|\vw\|_2=1} \pr\bigl(\log (|\vw^{\top}\vx|) < c\log \varepsilon\bigr) < \varepsilon
$$ for any $\varepsilon \in (0,1)$.
Simply speaking, the input data $\vx$ is not too small with high probability.
This condition is rather mild. For example, it holds when each component of $x$ has a a bounded density function. 
This alternative condition ensures that for some small constant $\varepsilon > 0$ and any $\vw \in \mathbb{R}^d$,
there exists a surrogate of $\vw$, $\hat{\vw} \in \mathbb{R}^d$ with $\log \|\hat{\vw}\|_2 = O(-\log\varepsilon)$, such that
$$
\pr(|\sigma(\vw^{\top}\vx) - \sigma(\hat{\vw}^{\top}\vx)| > \varepsilon) < \varepsilon.
$$
And this can be used to surrogate the assumption of $\vw$ in Assumption~\ref{ass_inner} throughout the proofs in the appendix.
Assumption~\ref{ass_activation2} means that $\sigma(\cdot)$ is not a nearly-constant function.
This condition is only used to bound the minimax lower bound in Theorem~\ref{thm:minimax}.

\begin{theorem} [Approximation error bound~\citep{barron1993universal}] 
\label{thm:approximation}
Suppose that Assumptions~\ref{ass_activation}, \ref{ass_regular}, \ref{ass_inner} hold.
We have
\begin{align}
\inf_{f \in \mathcal{F}_V}\biggl\{\int_{\mathbb{X}} (f(\vx) - f_*(\vx))^2 \mu(dx)\biggr\}^{1/2} \le 2C \, \biggl(\frac{1}{\sqrt{r}} + \delta_{\eta}\biggr), \nonumber
\end{align} 
where $\mu$ denotes a probability measure on $\mathbb{X}$,
\begin{align}
	\delta_{\eta} = \inf_{0 < \varepsilon < 1/2} \biggl\{ 2\varepsilon + \sup_{|x| > \varepsilon} \bigl|\sigma(\eta x) - {\bf 1}\{x > 0\}\bigr| \biggr\},\label{eq2}
\end{align}
 $\eta$ is defined in Assumption~\ref{ass_inner}, and $C$ is defined in (\ref{eq:smooth}).
\end{theorem}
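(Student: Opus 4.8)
The plan is to follow the superposition argument of \citet{barron1993universal}, splitting the approximation into two independent error sources that match the two terms $1/\sqrt{r}$ and $\delta_\eta$ on the right-hand side. First I would use the Fourier representation supplied by Assumption~\ref{ass_regular}. Writing $f_*(\vx)=\int_{\mathbb{R}^d}e^{i\omega^\top\vx}F(d\omega)$ and taking real parts (to keep the construction real-valued) gives
\[
f_*(\vx)-f_*(0)=\int_{\mathbb{R}^d}\bigl(\cos(\omega^\top\vx-\theta(\omega))-\cos\theta(\omega)\bigr)\,F(d\omega),
\]
where $\theta(\omega)$ is the phase of the complex Fourier measure and $F$ is its magnitude. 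Each univariate map $u\mapsto\cos(u-\theta)-\cos\theta$ vanishes at $u=0$ and has derivative bounded by $1$, so by the fundamental theorem of calculus it is a signed superposition of threshold ridge functions $\vx\mapsto\mathbf{1}\{\omega^\top\vx>t\}$ indexed by $t$. Because $|\omega^\top\vx|\le\|\omega\|_{\mathbb{X}}$ on $\mathbb{X}$, the relevant thresholds lie in $[-\|\omega\|_{\mathbb{X}},\|\omega\|_{\mathbb{X}}]$ and the total-variation budget contributed by each frequency is at most $2\|\omega\|_{\mathbb{X}}$; integrating against $F$ and invoking \eqref{eq:smooth} shows the total signed mass is at most $2\int_{\mathbb{R}^d}\|\omega\|_{\mathbb{X}}F(d\omega)\le2C$. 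Consequently $(f_*-f_*(0))/(2C)$ lies in the $L_2(\mu)$-closed convex hull of $\{\pm\mathbf{1}\{\omega^\top\vx>t\}:\omega\in\mathbb{R}^d,\ t\in\mathbb{R}\}$.

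Next I would invoke the Maurey--Jones--Barron convex-hull lemma: if $h$ belongs to the $L_2(\mu)$-closed convex hull of a set whose members have $L_2(\mu)$-norm at most $b$, then for every $r$ some $r$-term convex combination approximates $h$ within $b/\sqrt{r}$. Applying this with $b=2C$ (since $\|\mathbf{1}\{\cdot\}\|_{L_2(\mu)}\le1$) produces a step-activation network $\sum_{j=1}^r a_j\mathbf{1}\{\omega_j^\top\vx>t_j\}+f_*(0)$ with $\sum_{j}|a_j|\le2C$ and $L_2(\mu)$-error at most $2C/\sqrt{r}$. I would then replace each threshold unit by a genuine sigmoidal unit $\sigma(\vw_j^\top\vx+b_j)$, taking $\vw_j$ parallel to $\omega_j$ and rescaled so that $\sup_j\|\vw_j\|_{\mathbb{X}}=\eta$ is as large as the polynomial-growth budget of Assumption~\ref{ass_inner} permits. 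By the definition \eqref{eq2}, the $L_2(\mu)$-cost of each such substitution is at most $\delta_\eta$, where the term $2\varepsilon$ bounds the contribution of a thin layer near the decision boundary and $\sup_{|x|>\varepsilon}|\sigma(\eta x)-\mathbf{1}\{x>0\}|$ bounds the error in the bulk. Since $\sum_j|a_j|\le2C$, this inflates the error by at most $2C\,\delta_\eta$, and the triangle inequality yields the claimed bound $2C(1/\sqrt{r}+\delta_\eta)$. Finally I would check that the constructed network lies in $\mathcal{F}_V$: its output coefficients satisfy $\|\va\|_1\le2C+|f_*(0)|$, which is below $V$ by Assumption~\ref{ass_regular}.

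I expect the main obstacle to be Step~1, namely setting up the integral representation so that the threshold functions carry exactly the total mass $2C$ dictated by the smoothness functional $\int_{\mathbb{R}^d}\|\omega\|_{\mathbb{X}}F(d\omega)$. This requires careful tracking of the phase $\theta(\omega)$, of the sign bookkeeping in the fundamental-theorem-of-calculus step, and of the uniform bound $|\omega^\top\vx|\le\|\omega\|_{\mathbb{X}}$ that ties the variation to $\|\omega\|_{\mathbb{X}}$. The convex-hull lemma of Step~2 is a routine second-moment (probabilistic) argument, and the sigmoid substitution of Step~3 is a direct norm estimate once $\delta_\eta$ is in hand; the delicate accounting is entirely in converting the Fourier data of $f_*$ into a bona fide convex combination with the sharp constant $2C$.
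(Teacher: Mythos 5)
First, a contextual note: the paper does not actually prove this theorem---it is imported verbatim from \citet{barron1993universal}---so your proposal has to be judged against the standard argument, whose outline (Fourier representation, Maurey--Jones--Barron sampling, step-to-sigmoid substitution) you have correctly identified. Your Steps 1, 2 and 4 are sound: the total-variation accounting that gives the mass $2C$, the convex-hull sampling bound $2C/\sqrt{r}$, and the verification that $\|\va\|_1 \le 2C + |f_*(0)| \le V$ are all as in Barron's proof. Ironically, the step you flagged as the main obstacle (Step 1) is fine; the genuine gap is in Step 3.

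The gap is the \emph{ordering} of the sampling and substitution steps. Having applied Maurey first, you hold $r$ discrete units $\mathbf{1}\{\omega_j^{\top}\vx > t_j\}$ and claim each can be swapped for a sigmoid at scale $\eta$ at an $L_2(\mu)$ cost of $\delta_\eta$, with the term $2\varepsilon$ ``bounding the contribution of a thin layer near the decision boundary.'' This is false for a general $\mu$: on the slab $\{\vx : |\omega_j^{\top}\vx - t_j| \le \varepsilon \|\omega_j\|_{\mathbb{X}}\}$ the pointwise error is of order one, and nothing bounds the $\mu$-measure of that slab by $2\varepsilon$---the theorem is asserted for \emph{every} probability measure $\mu$ on $\mathbb{X}$, and $\mu$ may even place an atom on the hyperplane $\omega_j^{\top}\vx = t_j$. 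The $2\varepsilon$ in the definition \eqref{eq2} of $\delta_\eta$ is a measure in the \emph{threshold} variable, not in $\vx$: in Barron's argument the step functions enter through an integral over thresholds $t$ whose density is bounded by $1$ (after normalizing by $\|\omega\|_{\mathbb{X}}$, using $|g'|\le 1$ in your fundamental-theorem-of-calculus decomposition), so for each \emph{fixed} $\vx$ the set of thresholds lying within $\varepsilon$ of the normalized $\omega^{\top}\vx$ carries mass at most $2\varepsilon$. The substitution must therefore be performed \emph{inside} the integral representation, yielding the uniform (pointwise in $\vx$, hence valid in $L_2(\mu)$ for every $\mu$) bound $|f_*(\vx) - f_*(0) - f_\sigma(\vx)| \le 2C\delta_\eta$, where $f_\sigma$ is an infinite mixture of sigmoidal units of total mass at most $2C$; only then does one run the Maurey sampling argument on that sigmoidal mixture to extract $r$ units at additional cost $2C/\sqrt{r}$. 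The repair is simply to reverse your Steps 2 and 3; as written, the per-unit substitution estimate your argument relies on does not hold.
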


\begin{theorem} [Statistical risk bound~\citep{barron1994approximation}]\label{thm_Barron}
Suppose that Assumptions~\ref{ass_activation}, \ref{ass_regular}, \ref{ass_inner} hold.
Then the $L_2$ estimator $\hat{f}_n$ in $\mathcal{F}_V$ satisfies 
$\E\{\hat{f}_n(\vx) - f_*(\vx)\}^2 \lesssim V^2/r + (rd\log n)/n.$
In particular, if we choose $r \asymp V\sqrt{n/(d\log n)}$, then 
$\E\{\hat{f}_n(\vx) - f_*(\vx)\}^2 \lesssim  V\sqrt{(d\log n)/n}.$
\end{theorem}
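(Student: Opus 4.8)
The plan is to follow the classical bias--variance route for empirical risk minimization: split the integrated squared error into an approximation (bias) part that is handled directly by Theorem~\ref{thm:approximation}, and an estimation (variance) part that is controlled through the metric entropy of $\mathcal{F}_V$. Writing $\bar{f} \de \argmin_{f \in \mathcal{F}_V}\int_{\mathbb{X}}(f-f_*)^2\mu(dx)$ for the best $L_2$ approximation within the class, the first step is to establish an oracle inequality of the form
$$\E\|\hat{f}_n - f_*\|^2 \lesssim \|\bar{f} - f_*\|^2 + \Delta_n, \qquad \Delta_n \de \sup_{f \in \mathcal{F}_V}\bigl|\E\, \l_2(y,f(\vx)) - n^{-1}\textstyle\sum_{i=1}^n \l_2(y_i,f(\vx_i))\bigr|,$$
which follows from the defining optimality of $\hat{f}_n$ under the empirical $L_2$ loss together with the standard ``basic inequality'' comparing the empirical risks of $\hat{f}_n$ and $\bar{f}$.

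For the approximation part, Theorem~\ref{thm:approximation} gives $\|\bar{f}-f_*\|^2 \le 4C^2\bigl(r^{-1/2}+\delta_{\eta}\bigr)^2 \lesssim C^2\bigl(1/r + \delta_{\eta}^2\bigr)$. Under Assumption~\ref{ass_inner} the activation approaches its step-function limit polynomially fast and $\log\eta = O(\log n)$, so the quantity $\delta_{\eta}$ defined in~(\ref{eq2}) is negligible relative to $1/\sqrt{r}$; since $C \lesssim V$ by Assumption~\ref{ass_regular}, this yields the claimed $V^2/r$ term.

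The heart of the argument, and the main obstacle, is bounding $\Delta_n$ by a quantity of order $(rd\log n)/n$ in the squared error. The key ingredient is a covering-number estimate for $\mathcal{F}_V$. First, every $f \in \mathcal{F}_V$ is uniformly bounded, since $\sigma$ is bounded (Assumption~\ref{ass_activation}) and $\|\va\|_1 \le V$. Next I construct an $\epsilon$-net: the inner parameters $(\vw_j,b_j)$ number $r(d+1)$ and lie in a bounded region because $\|\vw_j\|_{\mathbb{X}}\le\eta$ with $\log\eta = O(\log n)$ (Assumption~\ref{ass_inner}), while the output coefficients are constrained by $\|\va\|_1\le V$; discretizing each coordinate on a grid of spacing proportional to $\epsilon$ and using the $L$-Lipschitz property of $\sigma$ to convert parameter perturbations into function perturbations gives
$$\log N(\epsilon, \mathcal{F}_V, \|\cdot\|_{\infty}) \lesssim rd \log(\eta L V/\epsilon) \lesssim rd\log n$$
for $\epsilon$ polynomially small in $n$. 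Feeding this entropy bound into a standard uniform-deviation inequality for bounded losses (symmetrization followed by either a chaining bound or a one-step discretization plus a maximal inequality over the net) controls the centered empirical process, and the localization step intrinsic to squared-loss analysis --- exploiting that the loss variance is controlled by its mean, so that the excess risk satisfies a Bernstein-type relation with $\|f-f_*\|^2$ --- upgrades the crude $\sqrt{(rd\log n)/n}$ deviation to the fast rate $(rd\log n)/n$.

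Combining the two parts gives $\E\|\hat{f}_n - f_*\|^2 \lesssim V^2/r + (rd\log n)/n$. Minimizing the right-hand side over $r$ by balancing the two competing terms yields the stated choice $r \asymp V\sqrt{n/(d\log n)}$ and the resulting rate $V\sqrt{(d\log n)/n}$. I expect the covering-number computation to be the most delicate step: in particular, verifying that the otherwise unbounded inner weights $\vw_j$ are effectively confined by Assumption~\ref{ass_inner}, so that only an $O(\log n)$ factor is paid, is what makes the $rd\log n$ dependence go through.
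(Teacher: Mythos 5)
First, a point of comparison: the paper does not prove this statement at all. Theorem~\ref{thm_Barron} is quoted as background from \citet{barron1994approximation}, and none of the appendix proofs concern it; the paper's own machinery (Lemma~\ref{lem:complexity} plus the contraction argument) is developed for the $L_1$ loss precisely to improve on this classical bound. So your proposal can only be judged against the classical argument --- and your overall plan is indeed that argument: approximation error $V^2/r$ from Theorem~\ref{thm:approximation}, a parameter-discretization entropy bound of order $rd\log n$ (your covering-number computation, including the role of $\log\eta = O(\log n)$, is right), and balancing over $r$, which correctly yields $r \asymp V\sqrt{n/(d\log n)}$.

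There is, however, a genuine structural flaw in how you set up the estimation part. The oracle inequality you display, $\E\|\hat{f}_n - f_*\|^2 \lesssim \|\bar{f}-f_*\|^2 + \Delta_n$ with $\Delta_n$ the \emph{global} uniform deviation of the empirical $L_2$ loss over $\mathcal{F}_V$, can never produce the claimed fast rate: for a class with $\asymp rd$ effective parameters, $\Delta_n$ itself is of order $\sqrt{(rd\log n)/n}$ (it cannot be smaller --- this is the size of the symmetrized process you compute), so that decomposition caps you at $V^2/r + \sqrt{(rd\log n)/n}$, which after optimizing $r$ gives roughly $V^{2/3}\{(d\log n)/n\}^{1/3}$, strictly worse than the stated $V\sqrt{(d\log n)/n}$. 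The localization/Bernstein step you invoke cannot ``upgrade'' $\Delta_n$ after the fact; it must \emph{replace} the decomposition. Concretely, one uses that the excess loss $(y-f(\vx))^2 - (y-f_*(\vx))^2$ has variance bounded by a constant times its mean (for bounded $f$, $f_*$), bounds the empirical process only over the localized set $\{f : \E(f-f_*)^2 \le \delta\}$, and solves the resulting fixed-point inequality --- this is the content of Barron's index-of-resolvability argument and of standard least-squares oracle inequalities with a constant larger than one in front of the approximation error. A second, smaller gap: $\delta_{\eta}$ being negligible relative to $1/\sqrt{r}$ does not follow from Assumption~\ref{ass_inner}, which only caps $\eta$ from above ($\log\eta = O(\log n)$); you must additionally \emph{choose} $\eta$ polynomially large in $r$, exactly as the paper does in Theorem~\ref{thm:l1out} when it says ``choosing the parameters $r,\eta$ large enough.''
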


It is known that a typical parametric rate under the $L_2$ loss is at the order of $O(d/n)$, much faster than the above result.
This gap is mainly due to excessive model complexity in bounding generalization errors.
We will show in Section~\ref{sec_main} that the gap in the rate of convergence can be filled when using $L_1$ loss.  Our technique will be based on the machinery of Rademacher complexity, and we bound this complexity through a joint analysis of the norm of coefficients (`norm-based') as well as dimension of parameters (`dimension-based').

\vspace{-0.1cm}
\subsection{Model complexity and generalization error}
\vspace{-0.1cm}

The statistical risk consists of two parts.
The first part is an approximation error term non-increasing in the number of neurons $r$, and the second part describes generalization errors.
The key issue for risk analysis is to bound the second term using a suitable model complexity and then tradeoff with the first term.
We will develop our theory based on the following measure of complexity. 

Let $\mathcal{F}$ denote a class of functions each mapping from $\mathbb{X}$ to $\mathbb{R}$,
and $\vx_1, \vx_2, \ldots, \vx_n \in \mathbb{X}$.
Following a similar terminology as in~\citep{neyshabur2015norm}, the {Rademacher complexity}, or simply `complexity', of a function class $\mathcal{F}$ is defined by
$ 
\E \sup_{f \in \mathcal{F}}|n^{-1} \sum_{i = 1}^n \xi_i f(\vx_i)|,
$ 
where $\xi_i, i = 1, 2, \ldots, n$ are independent symmetric Bernoulli random variables.

%

\begin{lemma}[Rademacher complexity of $\mathcal{F}_V$] \label{lem:complexity}
Suppose that Assumptions~\ref{ass_activation}, \ref{ass_inner} hold.
Then for the Rademacher complexity of $\mathcal{F}_V$, we have 
\begin{align}
\E \sup_{f \in \mathcal{F}_V}\biggl|\frac1n \sum_{i = 1}^n \xi_i f(\vx_i)\biggr| \lesssim \frac{V\sqrt{d\log n}}{\sqrt{n}} \label{eq:dim-norm}.
\end{align}
\end{lemma}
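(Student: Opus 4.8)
The plan is to separate the bound into a \emph{norm-based} reduction, powered by the $L_1$ constraint on $\va$, and a \emph{dimension-based} estimate for a single hidden neuron. First I would write any $f \in \mathcal{F}_V$ as $f(\vx) = \sum_{j=0}^r a_j h_j(\vx)$ with $h_0 \equiv 1$ and $h_j(\vx) = \sigma(\vw_j^\top \vx + b_j)$ for $j \ge 1$, so that
\begin{align*}
\frac1n \sum_{i=1}^n \xi_i f(\vx_i) = \sum_{j=0}^r a_j g_j, \qquad g_j \de \frac1n \sum_{i=1}^n \xi_i h_j(\vx_i).
\end{align*}
For fixed inner parameters the vector $\vg=(g_0,\dots,g_r)^\top$ is determined, and H\"older's inequality gives $\sup_{\|\va\|_1 \le V}|\sum_j a_j g_j| = V\|\vg\|_\infty$. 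Since the inner parameters of distinct neurons may be chosen independently and every slot is identical, taking the supremum over all of $\mathcal{F}_V$ collapses the maximum over $j$ to a supremum over a \emph{single} neuron:
\begin{align*}
\sup_{f \in \mathcal{F}_V}\biggl|\frac1n\sum_{i=1}^n \xi_i f(\vx_i)\biggr| = V \max\biggl\{\biggl|\frac1n\sum_{i=1}^n \xi_i\biggr|,\ \sup_{\vw, b}\biggl|\frac1n\sum_{i=1}^n \xi_i \sigma(\vw^\top \vx_i + b)\biggr|\biggr\}.
\end{align*}
The constant-neuron term contributes $\E|n^{-1}\sum_i \xi_i| \le n^{-1/2}$ in expectation, so the whole problem reduces to bounding the single-neuron complexity $\mathfrak{R} \de \E \sup_{\vw,b}|n^{-1}\sum_i \xi_i \sigma(\vw^\top \vx_i + b)|$ by $\sqrt{d\log n / n}$.

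For this single-neuron term I would cover the parameter space rather than invoke a contraction inequality, and here Assumption~\ref{ass_inner} plays two distinct roles. Its saturation half, $|\sigma(x)-{\bf 1}\{x>0\}|<\varepsilon$ for $|x|>x_\varepsilon$ with $x_\varepsilon$ polynomial in $1/\varepsilon$, lets me truncate the bias: since $|\vw^\top \vx_i| \le \|\vw\|_{\mathbb{X}} \le \eta$, any $b$ with $|b| > \eta + x_\varepsilon$ forces $\sigma(\vw^\top\vx_i + b)$ to lie within $\varepsilon$ of a constant function, whose complexity is at most $n^{-1/2}$, so it suffices to range $b$ over $[-(\eta+x_\varepsilon),\,\eta+x_\varepsilon]$. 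Its growth half, $\log\eta = O(\log n)$, together with the choice $\varepsilon \asymp n^{-1/2}$ (whence $\log x_\varepsilon = O(\log n)$), guarantees that both this bias range and the weight ball $\{\|\vw\|_{\mathbb{X}} \le \eta\}$ have logarithm of order $\log n$.

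With the parameter domain thus bounded I would build an $\varepsilon$-net in the metric $\max\{\|\vw-\vw'\|_{\mathbb{X}}, |b-b'|\}$. Because $\sigma$ is $L$-Lipschitz (Assumption~\ref{ass_activation}) and $|\vx_i^\top(\vw-\vw')| \le \|\vw-\vw'\|_{\mathbb{X}}$, such a net induces a uniform $O(L\varepsilon)$-net over the neuron functions at the data. The domain is a $(d+1)$-dimensional ball of radius $O(\eta+x_\varepsilon)$, hence admits a net of cardinality $N(\varepsilon) \le (C(\eta+x_\varepsilon)/\varepsilon)^{d+1}$, giving $\log N(\varepsilon) = O\bigl(d\log((\eta+x_\varepsilon)/\varepsilon)\bigr) = O(d\log n)$. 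Applying Massart's finite-class lemma to this net, whose members are bounded since $0 \le \sigma \le 1$, bounds its Rademacher complexity by $O(\sqrt{\log N(\varepsilon)/n}) = O(\sqrt{d\log n/n})$, while the discretization error is $O(L\varepsilon)=O(n^{-1/2})$; multiplying by $V$ delivers the claim.

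The crux, and the essence of the ``amalgamation'' of norm- and dimension-based analysis, is precisely this single-neuron step. A naive route through the contraction lemma would pass to the linear class $\{\vw^\top\vx + b\}$ and incur a factor $\|\vw\|_{\mathbb{X}}$, which under Assumption~\ref{ass_inner} is only polynomially bounded in $n$ and would swamp the rate. The dimension-based covering instead charges only the $d+1$ free parameters of one neuron and exploits the boundedness of $\sigma$, so that the size $\eta$ of the weight domain enters solely through $\log\eta$; this is what converts a prohibitive polynomial-in-$\eta$ penalty into the harmless $\sqrt{\log n}$ factor and yields the near-parametric $\sqrt{d/n}$ scaling.
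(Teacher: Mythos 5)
Your proposal is correct and follows essentially the same route as the paper: a H\"older-type norm-based reduction over the $L_1$-constrained output layer to a single-neuron class, followed by a dimension-based $\varepsilon$-net over the polynomially bounded inner parameters with a finite-class maximal inequality (your Massart step playing the role of the paper's Bernstein-plus-union-bound step) and $\varepsilon$ of order $n^{-1/2}$. If anything, your explicit truncation of the bias $b$ via the saturation part of Assumption~\ref{ass_inner} makes explicit a detail that the paper's net construction leaves implicit.
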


The proof is included in Appendix~\ref{robustrefLemComplexity}.
The bound in \eqref{eq:dim-norm} is derived from an amalgamation of dimension-based and norm-based analysis elaborated in the appendix.
It is somewhat surprising that the bound does not explicitly involve the approximation error part (that depends on $r$ and $\eta$).
This Rademacher complexity bound enables us to derive tight statistical risk bounds in the following section.

\vspace{-0.1cm}
\section{Main Results} \label{sec_main}
\vspace{-0.1cm}

\subsection{Statistical risk bound for the $L_1$ regularized networks in (\ref{Fv1})}
\begin{theorem} [Statistical risk bound]
\label{thm:l1out}
Suppose that Assumptions~\ref{ass_activation}, \ref{ass_regular}, \ref{ass_inner} hold.
Then the constrained $L_1$ estimator $\hat{f}_n$ over $\mathcal{F}_V$ satisfies
\begin{align}
\R(\hat{f}_n) \lesssim \biggl(\frac{1}{\sqrt{r}} + \delta_{\eta}\biggr)C + \frac{V\sqrt{d\log n}+\tau}{\sqrt{n}}, \label{eq:l1risk}
\end{align}
where $\delta_{\eta}$ is defined in (\ref{eq2}), and $\tau$ was introduced in (\ref{eq3}).
Moreover,  choosing the parameters $r, \eta$ large enough, we have 
\begin{align}
\R(\hat{f}_n) \lesssim \frac{V\sqrt{d\log n}+\tau}{\sqrt{n}}.\label{eq5}
\end{align}
\end{theorem}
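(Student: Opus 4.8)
The plan is to control the excess risk $\R(\hat{f}_n)$ through the usual split into an approximation (bias) term and an estimation (variance) term, converting the $L_2$ approximation guarantee of Theorem~\ref{thm:approximation} into the $L_1$ risk and bounding the variance term by the Rademacher complexity of $\mathcal{F}_V$ from Lemma~\ref{lem:complexity}. Write $L(f) \de \E\,\l(y, f(\vx))$ and $\hat{L}_n(f) \de n^{-1}\sum_{i=1}^n \l(y_i, f(\vx_i))$, so that $\R(f) = L(f) - L(f_*)$ and $\hat{f}_n = \argmin_{f \in \mathcal{F}_V} \hat{L}_n(f)$. Let $\tilde{f} \in \mathcal{F}_V$ be a function nearly attaining the infimum in Theorem~\ref{thm:approximation} (it depends on $r,\eta$ but not on the data). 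Then
\begin{align*}
\R(\hat{f}_n) = \underbrace{\R(\tilde{f})}_{\text{approximation}} + \underbrace{\bigl(L(\hat{f}_n) - L(\tilde{f})\bigr)}_{\text{estimation}}.
\end{align*}

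For the approximation term I would exploit that the $L_1$ loss is $1$-Lipschitz and that $\varepsilon$ is centered: conditioning on $\vx$ and applying the reverse triangle inequality $|a+\varepsilon|-|\varepsilon|\le|a|$ gives $\R(\tilde{f}) = \E\,\{|f_*(\vx) - \tilde{f}(\vx) + \varepsilon| - |\varepsilon|\} \le \E\,|f_*(\vx) - \tilde{f}(\vx)|$. Jensen's inequality then bounds this by $\{\int_{\mathbb{X}}(\tilde{f}-f_*)^2\,d\mu\}^{1/2}$, which by the choice of $\tilde{f}$ and Theorem~\ref{thm:approximation} is $\lesssim C(1/\sqrt{r} + \delta_{\eta})$. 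This is the step that transfers the $L_2$ bound to the $L_1$ risk.

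For the estimation term I would use the optimality $\hat{L}_n(\hat{f}_n)\le\hat{L}_n(\tilde{f})$ to get $L(\hat{f}_n)-L(\tilde{f}) \le \{L(\hat{f}_n)-\hat{L}_n(\hat{f}_n)\} + \{\hat{L}_n(\tilde{f})-L(\tilde{f})\} \le \sup_{f\in\mathcal{F}_V}\{L(f)-\hat{L}_n(f)\} + \{\hat{L}_n(\tilde{f})-L(\tilde{f})\}$. Taking expectations annihilates the last term (since $\tilde{f}$ is deterministic) and leaves $\E\sup_{f}\{L(f)-\hat{L}_n(f)\}$, which symmetrization bounds by $2\,\E\sup_{f}|n^{-1}\sum_i \xi_i|y_i - f(\vx_i)||$. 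Because $t\mapsto|y_i-t|$ is $1$-Lipschitz, I would recenter at $t=0$, writing $|y_i-f(\vx_i)| = (|y_i-f(\vx_i)|-|y_i|) + |y_i|$, and split the supremum. The Ledoux--Talagrand contraction principle bounds the first, $f$-dependent piece by a constant times $\E\sup_{f}|n^{-1}\sum_i \xi_i f(\vx_i)|$, which Lemma~\ref{lem:complexity} controls by $V\sqrt{d\log n}/\sqrt{n}$; the remaining $f$-free piece $\E|n^{-1}\sum_i \xi_i|y_i||$ is at most $(n^{-1}\E\,y_1^2)^{1/2}$, and $\E\,y_1^2 \le \|f_*\|_{\infty}^2 + \tau^2$ yields the $\tau/\sqrt{n}$ contribution. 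Summing the three bounds gives \eqref{eq:l1risk}. The hard part here is precisely this empirical-process control under an \emph{unbounded} response: since $\varepsilon_i$ need only have a finite second moment, the symmetrization and contraction must be arranged so that the unbounded, $f$-independent part is peeled off into the explicit $\tau/\sqrt{n}$ term rather than polluting the complexity of $\mathcal{F}_V$, and one must check that recentering the Lipschitz maps at the origin legitimizes the contraction step.

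Finally, for \eqref{eq5} I would observe that the complexity term $V\sqrt{d\log n}/\sqrt{n}$ from Lemma~\ref{lem:complexity} is \emph{independent} of $r$ and $\eta$, so enlarging them only shrinks the approximation term $C(1/\sqrt{r}+\delta_{\eta})$. Sending $r\to\infty$ kills $1/\sqrt{r}$, and by Assumption~\ref{ass_inner} (polynomial approach of $\sigma$ to the step), for each fixed $\varepsilon$ one has $\sup_{|x|>\varepsilon}|\sigma(\eta x) - {\bf 1}\{x>0\}|\to 0$ as $\eta\to\infty$, hence $\delta_{\eta}\to 0$. Choosing $r,\eta$ large enough makes the approximation term negligible relative to $1/\sqrt{n}$, establishing \eqref{eq5}.
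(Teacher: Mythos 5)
Your proposal is correct and follows essentially the same route as the paper: the paper's Lemma~\ref{lem:general_risk} performs the identical decomposition (approximation term bounded via the triangle inequality, Jensen, and Theorem~\ref{thm:approximation}; estimation term bounded via symmetrization and the contraction Lemma~\ref{lem:contract}, with the $f$-free noise part peeled off as $\sqrt{\E y^2/n}$), and then eq.~\eqref{eq:l1risk} follows by plugging in Lemma~\ref{lem:complexity}, with \eqref{eq5} obtained exactly as you argue, from $1/\sqrt{r}\to 0$ and $\delta_\eta \to 0$ while the complexity term is free of $r$ and $\eta$. The only cosmetic difference is that you recenter the loss at the origin and apply per-coordinate contraction, whereas the paper applies contraction to $g(x)=|x|$ composed with $f(\vz_i)-f_*(\vz_i)-\varepsilon_i$ and then splits off $\xi_i(f_*(\vz_i)+\varepsilon_i)$; these are equivalent arrangements of the same argument.
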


The proof is in Appendix~\ref{robustrefThmLout}.
We briefly explain our main idea in deriving the risk bound (\ref{eq:l1risk}). 
A standard statistical risk bound  contains two parts 
which correspond to the approximation error and generalization error, respectively.
The approximation error part in (\ref{eq:l1risk}) is the first term, which involves the hidden dimension $r$ and the norm of input coefficients through $\eta$.
This observation motivates us to use the norm of output-layer coefficients through $V$ and the input dimension $d$ to derive a generalization error bound. In this way, the generalization error term does not involve $r$ already used for bounding the approximation error,  and thus a bias-variance tradeoff through $r$ is avoided. 
This thought leads to the generalization error part in (\ref{eq:l1risk}), which is the second term involving $V$ and $d$. Its proof combines the machinery of both dimension-based and norm-based complexity analysis.
From our analysis, the error bound in Theorem~\ref{thm:l1out} is a consequence of the $L_1$ loss function and the employed $L_1$ regularization.
In comparison with the previous result of  Theorem~\ref{thm_Barron}, the bound obtained in Theorem~\ref{thm:l1out} is tight and it approaches the parametric rate $\sqrt{d/n}$ for the $d<n$ regime. 
Though we can only prove for $L_1$ loss in this work, we conjecture that the same rate is achieved using $L_2$ loss.

In the following, we further show that the above risk bound is minimax optimal.
The minimax optimality indicates that deep neural networks with more than two layers will not perform much better than shallow neural networks when the underlying regression function belongs to $\mathcal{F}_V$. 

\begin{theorem} [Minimax risk bound]
\label{thm:minimax}
Suppose that Assumptions~\ref{ass_activation} and \ref{ass_activation2} hold,
and $\vx_1, \vx_2, \ldots, \vx_n \overset{iid}{\sim} \mathcal{N}(0, \vI_d)$, 
then
$ 
\inf_{\hat{f}_n}\sup_{f \in \mathcal{F}_V} \R(\hat{f}_n(x)) \gtrsim V\sqrt{d/n}.
$ 
\end{theorem}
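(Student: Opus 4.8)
The plan is to prove the lower bound by reducing the minimax problem to a multiple-hypothesis testing problem and invoking Fano's inequality. I would first fix the noise to be Gaussian, $\varepsilon_i\sim\mathcal{N}(0,\tau^2)$, which is symmetric and satisfies the variance bound~(\ref{eq3}); since the infimum over estimators is pitted against the supremum over $f_*$, exhibiting a single hard noise law is enough. Inside $\mathcal{F}_V$ I would then build a finite family $\{f_\omega\}_{\omega\in\Omega}$ of candidate regression functions, all sharing the same architecture but differing only in their output weights, arranged so that (i) they are pairwise separated in the relevant risk semimetric and (ii) the induced data distributions are statistically close. Under Gaussian noise the Kullback--Leibler divergence between the laws of the $n$ i.i.d.\ samples generated by $f_\omega$ and $f_{\omega'}$ equals $\frac{n}{2\tau^2}\,\mathbb{E}_{\vx}\{(f_\omega(\vx)-f_{\omega'}(\vx))^2\}$, which turns the testing requirement into a bound on the $L_2(\mu)$ distances among the candidates.

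For the construction I would use $d$ nearly orthogonal neurons, say $\psi_j(\vx)=\sigma(w_0 x_j+b_0)$ for a fixed slope $w_0$ and bias $b_0$, and set $f_\omega(\vx)=\gamma\sum_{j=1}^d\omega_j\psi_j(\vx)$ with $\omega\in\{0,1\}^d$ and an amplitude $\gamma$ to be calibrated; the output budget is $\|\va\|_1=\gamma\,\|\omega\|_1\le\gamma d$, so imposing $\gamma d\le V$ keeps every $f_\omega$ in $\mathcal{F}_V$. Assumption~\ref{ass_activation2} is precisely what guarantees that each $\psi_j$ has strictly positive variance under $\vx\sim\mathcal{N}(0,\vI_d)$, which yields the lower isometry $\mathbb{E}_{\vx}\{(f_\omega-f_{\omega'})^2\}\gtrsim\gamma^2\,d_H(\omega,\omega')$ in terms of the Hamming distance $d_H$. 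To extract the dimension factor I would pass to a Varshamov--Gilbert subset $\Omega_0\subset\{0,1\}^d$ of cardinality $2^{\Omega(d)}$ whose elements are pairwise $\Omega(d)$-separated in Hamming distance, so that $\log|\Omega_0|\asymp d$ while the pairwise $L_2$ distances, and hence the pairwise KL divergences, stay of order $\frac{n\gamma^2 d}{\tau^2}$. Tuning $\gamma$ so that this is a small multiple of $\log|\Omega_0|\asymp d$ fixes the amplitude and produces the $\sqrt{d/n}$ scaling, while the budget constraint $\gamma d\le V$ is what injects the factor $V$.

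The step I expect to be the genuine obstacle is converting an $L_2(\mu)$ separation of the candidates into a lower bound on the statistical risk $\R$, because under the $L_1$ loss the excess risk is only \emph{locally quadratic}. Writing $\Delta(\vx)=\hat f_n(\vx)-f_\omega(\vx)$ and using the symmetry of $\varepsilon$, one has the identity $\R(\hat f_n)=\mathbb{E}_{\vx}\int_0^{|\Delta(\vx)|}\pr(|\varepsilon|\le s)\,ds$, whose integrand behaves like $2p_\varepsilon(0)\,s$ for small $s$ and becomes linear only once $s$ exceeds the noise scale. I would therefore have to argue that, for the calibrated amplitude, $|\Delta|$ is bounded below on a set of $\vx$ of positive $\mu$-measure (once more exploiting that the $\psi_j$ are genuinely varying through Assumption~\ref{ass_activation2}), so that $\R$ is controlled from below by the $L_2(\mu)$ separation rather than by its square; managing this transition while keeping every constant uniform across the competing hypotheses is the delicate point. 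Once the risk-separation bound and the closeness-in-KL bound are both secured, Fano's inequality delivers $\inf_{\hat f_n}\sup_{\omega\in\Omega_0}\R(\hat f_n)\gtrsim V\sqrt{d/n}$, which is exactly the assertion of Theorem~\ref{thm:minimax}. An Assouad-style bit-by-bit argument over the hypercube is a viable alternative yielding the same rate, but I would favor Fano here because the $\sqrt{d}$ then emerges transparently from $\log|\Omega_0|\asymp d$.
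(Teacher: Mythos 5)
Your high-level strategy (reduce to finitely many hypotheses and apply Fano) is sound, and it is essentially what underlies the paper's proof, which packages the same information-theoretic argument via Theorem~1 of Yang and Barron (1999). The genuine gap is in your construction: a hypercube over the \emph{output} coefficients cannot produce the claimed rate. Because of the budget $\|\va\|_1\le V$ you must take $\gamma\le V/d$, so (even after the centering of the $\psi_j$'s that you would additionally need --- with $\mathbb{E}\psi_j\neq 0$ the mean term $\bigl(\sum_j(\omega_j-\omega_j')\gamma\,\mathbb{E}\psi_j\bigr)^2$ is of order $\gamma^2 d_H^2$, which ruins your KL bound) the pairwise separation is only $\gamma\sqrt{d_H}\lesssim V/\sqrt{d}$. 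Chasing this through Fano --- risk separation in the quadratic regime $\asymp\gamma^2 d/\tau$, per-sample KL $\asymp\gamma^2 d/\tau^2$, $\log|\Omega_0|\asymp d$ --- and optimizing over both $\gamma$ and the noise level shows your construction tops out at a lower bound of order $V/\sqrt{n}$: short of the theorem by exactly a $\sqrt{d}$ factor, and the budget constraint does not ``inject'' $V$ in the way you assert. The $\sqrt{d}$ must come from the \emph{input-weight} directions, which is what the paper exploits: it takes the subclass $\mathcal{F}_0=\{V\sigma(\vw^{\top}\vx):\|\vw\|_2=1\}$, a single neuron carrying the full amplitude $V$, uses Assumption~\ref{ass_activation2} to get $\mathbb{E}|V\sigma(\vw_1^{\top}\vx)-V\sigma(\vw_2^{\top}\vx)|\gtrsim V\|\vw_1-\vw_2\|_2$, and packs the unit sphere: this gives $e^{cd}$ hypotheses with pairwise separation $\asymp V$ (not $V/\sqrt{d}$) at the same log-cardinality $\asymp d$.

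The second gap is the step you yourself flag as the obstacle: at your calibration the linear-regime conversion you hope for is not delicate but impossible. With $\gamma\asymp\tau/\sqrt{n}$ the difference $\Delta=f_\omega-f_{\omega'}$ is typically of size $\gamma\sqrt{d}\asymp\tau\sqrt{d/n}\ll\tau$, and your own identity $\R=\mathbb{E}\int_0^{|\Delta|}\pr(|\varepsilon|\le s)\,ds$ then forces $\R\asymp\|\Delta\|_2^2/\tau$ up to exponentially small corrections (the event $|\Delta|\gtrsim\tau$ has probability at most $e^{-cn/d}$ by Hoeffding); no positive-measure-set argument restores linearity, since linearity only sets in once $|\Delta|$ exceeds the noise scale. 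The way the rate actually arises is to \emph{accept} the quadratic conversion and make the noise large: with the sphere construction (separation $\asymp V$) take Gaussian noise of standard deviation $\tau\asymp V\sqrt{n/d}$, so that the total KL is $\asymp nV^2/\tau^2\asymp d\asymp\log(\mathrm{card})$ while the risk separation is $\asymp V^2/\tau\asymp V\sqrt{d/n}$, exactly the claimed bound. (To be fair, the paper's own proof is terse on this point as well: it applies the Yang--Barron machinery with $L_1$-distance packing entropy and does not spell out the noise scaling, and the bound implicitly requires that the noise level permitted by \eqref{eq3} can be of order $V\sqrt{n/d}$; with a bounded noise scale the quadratic regime caps any such lower bound at order $\tau d/n$.) So the two missing ideas are: pack the input-weight sphere with a single full-amplitude neuron, and tune the noise variance --- not a refinement of the linear-regime argument or of the hypercube construction.
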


Here the $\mathcal{F}_V$ is the same one as defined in (\ref{Fv1}). All the smooth functions $f_*(\cdot)$ that satisfy $V>2C + f_*(0)$ and (\ref{eq:smooth}) belong to $\mathcal{F}_V$ according to Theorem \ref{thm:approximation}.
The proof is included in Appendix~\ref{robustrefThmMinimax}.

\vspace{-0.1cm}
\subsection{Adaptiveness to the input sparsity} \label{sec_discuss}
\vspace{-0.1cm}



It is common to input a large dimensional signal to a neural network, while only few components are genuinely significant for prediction.
For example, in environmental science, high dimensional weather signals are input for prediction while few are physically related \citep{xingjian2015convolutional}. In image processing, the image label is relevant to few background pixels~\citep{han2015learning}. In natural language processing, a large number of redundant sentences sourced from Wikipedia articles are input for language prediction~\citep{DingRRNN}. 
The practice motivates our next results to provide a tight risk bound for neural networks whose input signals are highly sparse. 

\begin{assumption} \label{ass_sparsity}
There exists a positive integer $k\leq d$ and an index set $S \subset \{1,\ldots,d\}$ with $\card(S) = k$,
such that $f_*(\vx) = g_*(\vx_S)$ for some function $g_*(\cdot)$ with probability one. 
\end{assumption}

The subset $S$ is generally unknown to data analysts. 
Nevertheless, if we know $k$, named the sparsity level, the risk bound could be further improved by a suitable regularization on the input coefficients.
We have the following result where $d$ is replaced with $k$ in the risk bound of Theorem~\ref{thm:l1out}. 

\begin{proposition}\label{prop_sparse}
Suppose that that Assumptions~\ref{ass_activation}, \ref{ass_regular}, \ref{ass_inner}, \ref{ass_sparsity} hold. 
Suppose that $\hat{f}_n$ is the $L_1$ estimator over the following function class
$$
\biggl\{f : \mathbb{R}^d \to \mathbb{R} \Bl f(\vx) = \sum_{j = 1}^r a_j\sigma(\vw_j^{\top}\vx + b_j) + a_0,  \|\va\|_1 \le V, \sup_j \|\vw_j\|_0 \le k \biggr\}.
$$ 
Then $\R(\hat{f}_n) \lesssim \sqrt{\{k\log(dn)\}/n}$.	
\end{proposition}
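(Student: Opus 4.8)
The plan is to follow the same two-part template used for Theorem~\ref{thm:l1out}, decomposing the statistical risk of the constrained estimator into an approximation error and a generalization error and then trading the two off. The only genuinely new ingredient is a sparse analogue of the Rademacher complexity bound in Lemma~\ref{lem:complexity}: I would show that imposing $\|\vw_j\|_0 \le k$ on every hidden weight replaces the factor $\sqrt{d\log n}$ by $\sqrt{k\log(dn)}$, after which the risk bound follows by repeating the argument behind Theorem~\ref{thm:l1out} almost verbatim.

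First I would dispose of the approximation error. By Assumption~\ref{ass_sparsity}, $f_*(\vx)=g_*(\vx_S)$ depends only on the $k$ coordinates indexed by $S$, and since the constraint $\|\vw_j\|_0 \le k$ permits each $\vw_j$ to be supported on $S$, the approximating networks may be taken to act only on $\vx_S$. The approximation problem then reduces to approximating $g_*$ by a $k$-dimensional network, so Theorem~\ref{thm:approximation} applies without change, its bound $2C(1/\sqrt r+\delta_\eta)$ depending on $C,r,\eta$ but not explicitly on the ambient dimension $d$. Hence the infimal $L_2$ approximation error over the sparse class is still $O\{(1/\sqrt r+\delta_\eta)C\}$, and choosing $r,\eta$ large renders this term negligible.

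The heart of the argument, and the step I expect to be the main obstacle, is bounding the Rademacher complexity of the sparse class. Because the output-layer constraint $\|\va\|_1\le V$ makes the class an $L_1$-scaled combination of the base functions $\vx\mapsto\sigma(\vw^\top\vx+b)$ with $\|\vw\|_0\le k$, it suffices to bound the complexity of this base class and multiply by $V$ (the bias term $a_0$ contributes only $O(V/\sqrt n)$). I would write the base class as a union over the $\binom{d}{k}$ admissible supports $T$, each piece being a $k$-dimensional sigmoidal class whose metric entropy the dimension-based half of Lemma~\ref{lem:complexity} controls at level $O(k\log n)$, the $\log n$ arising from the polynomial scaling of $\eta$ in Assumption~\ref{ass_inner}. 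The subtlety is that a naive maximal inequality over $\binom{d}{k}$ pieces is lossy; instead I would bound the covering number of the whole union by $\binom{d}{k}$ times the per-piece covering number and feed this into a Dudley entropy-integral estimate. Since $\log\binom{d}{k}\lesssim k\log d$, the two entropy contributions combine to $k\log d+k\log n\asymp k\log(dn)$, giving the complexity bound $V\sqrt{k\log(dn)}/\sqrt n$. Extracting only a $\sqrt{\log\binom{d}{k}}$ dependence from the union, rather than a full $\log\binom{d}{k}$ factor, is precisely where the covering-number route must be used with care.

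Finally, combining the two pieces through the $L_1$-loss symmetrization and contraction steps underlying Theorem~\ref{thm:l1out} gives $\R(\hat f_n)\lesssim (1/\sqrt r+\delta_\eta)C+\{V\sqrt{k\log(dn)}+\tau\}/\sqrt n$; sending $r,\eta$ to infinity and treating $C,V,\tau$ as constants leaves $\R(\hat f_n)\lesssim\sqrt{k\log(dn)/n}$, as claimed.
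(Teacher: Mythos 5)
Your proposal is correct and takes essentially the same route as the paper: the paper also reduces the proposition to a sparse analogue of the Rademacher complexity bound in Lemma~\ref{lem:complexity} (obtained by adjusting its $\varepsilon$-net/Bernstein union-bound argument so that the net ranges over the $\binom{d}{k}$ admissible supports, contributing $k\log d$ to the entropy and yielding $\sqrt{\{k\log(dn)\}/n}$), and then repeats the risk decomposition of Theorem~\ref{thm:l1out}. Your handling of the approximation term via Assumption~\ref{ass_sparsity} and the covering-number bookkeeping is precisely the ``easy adjustment'' the paper leaves implicit.
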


The proof is included in Appendix~\ref{robustrefPropSparse}.
The above statistical risk bound is also minimax optimal according to a similar argument in Theorem~\ref{thm:minimax}. 
From a practical point of view, the above $L_0$ constraint is usually difficult to implement, especially for a large input dimension $d$.  
Alternatively, one may impose an $L_1$ constraint instead of an $L_0$ constraint on the input coefficients. 
Our next result is concerned with the risk bound when the model is learned from a joint regularization on the output and input layers. 
For technical convenience, we will assume that $\mathbb{X}$ is a bounded set. 

\begin{theorem}\label{thm_sparse}
Consider the following function class of two-layer neural networks
\begin{align}
\mathcal{F}_{V, \eta} = \biggl\{f : \mathbb{R}^d \to \mathbb{R} \Bl f(\vx) = \sum_{j = 1}^r a_j\sigma(\vw_j^{\top}\vx + b_j) + a_0,  \|\va\|_1 \le V, \sup_{1\leq j\leq r} (\|\vw_j\|_1 + |b_j|) \le \eta \biggr\}.\nonumber
\end{align}
Suppose that $V \gtrsim C$, where $C$ is defined in \eqref{eq:smooth}.
Then the constrained $L_1$ estimator $\hat{f}_n$ over $\mathcal{F}_{V, \eta}$ satisfies
\begin{align}
\R(\hat{f}_n) \lesssim  C \, \biggl(\frac{1}{\sqrt{r}} + \delta_{\eta} \biggr) +  \frac{V\eta+\tau}{\sqrt{n}},\nonumber 
\end{align}
where $\delta_{\eta}$ is defined in (\ref{eq2}).
In particular, choosing $r$ large enough, we have 
$$
\R(\hat{f}_n) \lesssim C \delta_{\eta} +  \frac{V\eta+\tau}{\sqrt{n}}
$$ 
which does not involve the input dimension $d$ and the number of hidden neurons $r$.
Moreover, suppose that
$\sigma(x) = 1/(1 + e^{-x}), \quad
\eta \asymp \biggl(n\log^2n\biggr)^{1/3},
$ 
then
$\R(\hat{f}_n) \lesssim V \bigl\{(\log n)/n\bigr\}^{1/3}.$
\end{theorem}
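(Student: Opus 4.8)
The plan is to reuse the two-step template behind Theorem~\ref{thm:l1out}, decomposing $\R(\hat f_n)$ into an approximation error and a generalization error, but to replace the dimension-based complexity of Lemma~\ref{lem:complexity} by a norm-based, dimension-free one that exploits the new input-layer budget $\sup_{1\le j\le r}(\|\vw_j\|_1+|b_j|)\le\eta$. For the approximation error I would first check that the network produced by the construction underlying Theorem~\ref{thm:approximation} can be taken inside $\mathcal{F}_{V,\eta}$: since $\mathbb{X}$ is bounded, $\|\vw_j\|_{\mathbb{X}}\le M\|\vw_j\|_1$, so a budget of order $\eta$ on $\|\vw_j\|_1+|b_j|$ corresponds to the scaling parameter appearing in $\delta_\eta$, and the thresholds $b_j$ only matter over the bounded range of $\vw_j^{\top}\vx$ and can be kept within the same budget. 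As $V\gtrsim C$ secures the smoothness requirement of Assumption~\ref{ass_regular}, Theorem~\ref{thm:approximation} then yields $\inf_{f\in\mathcal{F}_{V,\eta}}\{\int_{\mathbb{X}}(f-f_*)^2\,d\mu\}^{1/2}\lesssim C(1/\sqrt r+\delta_\eta)$, the first term of the claimed bound.

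For the generalization error I would bound the Rademacher complexity of $\mathcal{F}_{V,\eta}$ in three moves. First, using $\|\va\|_1\le V$, peel the output layer so that the complexity is at most $V$ times that of the single-neuron class $\{\vx\mapsto\sigma(\vw^{\top}\vx+b):\|\vw\|_1+|b|\le\eta\}$, up to a constant-node term of order $V/\sqrt n$. Second, invoke the contraction lemma, valid because $\sigma$ is $L$-Lipschitz by Assumption~\ref{ass_activation}, to strip $\sigma$ and reduce to the affine class $\{\vx\mapsto\vw^{\top}\vx+b:\|\vw\|_1+|b|\le\eta\}$. Third --- and this is the crux --- control this affine class in a way that is free of $d$: since $\mathbb{X}$ is bounded the pre-activations lie in an interval of width $O(\eta)$, and either a Maurey-type empirical sparsification of the $L_1$ ball or an entropy bound exploiting the monotonicity and boundedness of $\sigma$ gives a metric entropy, hence (through Dudley's integral) a complexity, that depends only on $\eta M$ and $n$ and not polynomially on $d$. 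This furnishes the generalization term of order $(V\eta+\tau)/\sqrt n$, where the $\tau/\sqrt n$ piece arises from concentrating the noise contribution using \eqref{eq3}. Converting the complexity bound into the risk bound is identical to Theorem~\ref{thm:l1out}: the $L_1$ loss is $1$-Lipschitz, so symmetrization and a second application of the contraction lemma transfer the complexity of $\mathcal{F}_{V,\eta}$ to that of the induced loss class. Adding the two parts gives $\R(\hat f_n)\lesssim C(1/\sqrt r+\delta_\eta)+(V\eta+\tau)/\sqrt n$, and letting $r\to\infty$ removes the $1/\sqrt r$ term, leaving a bound free of both $d$ and $r$.

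For the sigmoid specialization, I would insert the explicit decay $\delta_\eta\asymp(\log\eta)/\eta$ --- obtained by optimizing $2\varepsilon+e^{-\eta\varepsilon}$ in \eqref{eq2} at $\varepsilon\asymp(\log\eta)/\eta$ --- into the risk bound. The approximation term then decays like $C(\log\eta)/\eta$ while the generalization term grows polynomially in $\eta$; balancing the two over $\eta$ reproduces the scaling $\eta\asymp(n\log^2 n)^{1/3}$ and the rate $V\{(\log n)/n\}^{1/3}$ stated in the theorem.

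I expect the main obstacle to be the third move in the generalization step: obtaining a genuinely dimension-free complexity for the input-regularized neuron class. The naive $L_1$/$L_\infty$ duality for the affine surrogate introduces a $\sqrt{\log d}$ factor, so removing every polynomial-in-$d$ term requires exploiting the boundedness (and monotone structure) of $\sigma$ rather than its Lipschitz constant alone, via Maurey's empirical method or a tailored entropy estimate. A secondary technical point is verifying that the thresholds $b_j$ in the approximating network respect the $\eta$-budget used by $\mathcal{F}_{V,\eta}$.
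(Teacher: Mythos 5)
Your plan is structurally the same as the paper's proof: the paper also peels the output layer using $\|\va\|_1 \le V$ (its inequality \eqref{new2}), applies the contraction Lemma~\ref{lem:contract} to strip $\sigma$, bounds the per-neuron contribution by the weight norm (its \eqref{new3}, packaged as Lemma~\ref{lem:small}), replaces $\|\cdot\|_{\mathbb{X}}$ by $\|\cdot\|_1$ on the bounded domain, and then feeds the complexity bound into Lemma~\ref{lem:general_risk} together with Theorem~\ref{thm:approximation}, exactly as in the proof of Theorem~\ref{thm:l1out}. So there is no divergence of route; the question is whether the crux step you flag can be closed.

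That step is a genuine gap, and neither your proposed fixes nor the paper's own argument close it. After contraction one faces the affine class $\{\vx \mapsto \vw^{\top}\vx + b : \|\vw\|_1 + |b| \le \eta\}$, whose Rademacher complexity equals $\eta\, \E \max\bigl(\|\frac1n\sum_{i}\xi_i \vx_i\|_{\infty}, |\frac1n\sum_i \xi_i|\bigr)$, which in the worst case over data in $\{\|\vx\|_\infty \le M\}$ is of order $\eta M \sqrt{(\log d)/n}$; the paper's \eqref{new3} silently writes $\log n$ where contraction actually delivers $\log d$, and gives no argument for the substitution. Your Maurey and Dudley-entropy alternatives do not repair this: Maurey's sparse approximants are indexed by which of the $d$ coordinates they use, so the resulting metric entropy still carries a $\log d$ factor. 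In fact no argument can remove it entirely. Take $d = 2^n$ and deterministic data whose $d$ columns enumerate all sign patterns, i.e.\ for every $(\xi_1,\ldots,\xi_n) \in \{\pm 1\}^n$ there is a coordinate $k^*$ with $x_{ik^*} = M\xi_i$ for all $i$. Choosing $\vw = \eta e_{k^*}$, $b = 0$ gives $\frac1n\sum_i \xi_i \sigma(\eta M \xi_i) \approx \frac12\{\sigma(\eta M) - \sigma(-\eta M)\}$, a positive constant once $\eta M \gtrsim 1$, so the neuron-class complexity is $\Omega(1)$ while the claimed bound $\eta M\sqrt{(\log n)/n}$ tends to zero. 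Hence the honest generalization term necessarily carries a $\sqrt{\log d}$ factor; this is harmless when $d \le \mathrm{poly}(n)$ (then $\log d \lesssim \log n$, consistent with the $\log(dn)$ appearing in Proposition~\ref{prop_sparse}), but the theorem's ``does not involve $d$'' cannot hold for arbitrarily large $d$, and your sketch should state the bound with $\log d$ rather than promise to eliminate it.

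A smaller arithmetic point: your claim that balancing $C(\log\eta)/\eta$ against $V\eta/\sqrt n$ ``reproduces'' $\eta \asymp (n\log^2 n)^{1/3}$ and the rate $V\{(\log n)/n\}^{1/3}$ does not check out. The balance of those two terms occurs at $\eta \asymp n^{1/4}(\log n)^{1/2}$ with value of order $n^{-1/4}(\log n)^{1/2}$, while plugging the stated $\eta \asymp (n\log^2 n)^{1/3}$ into the generalization term gives $V n^{-1/6}(\log n)^{2/3}$, which dominates the claimed rate. This inconsistency originates in the theorem statement itself, but a proof cannot assert that the optimization lands where it does not.
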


The proof is included in Appendix~\ref{robustrefThmSparset}.
In the above result, the risk bound is at the order of $O(n^{-1/3})$, which is slower than the $O(n^{-1/2})$ in the previous Theorem~\ref{thm:l1out} and Proposition~\ref{prop_sparse} if ignoring $d$ and logarithmic factors of $n$.
However, for a large input dimension $d$ that is even much larger than $n$, the bound can be much tighter than the previous bounds since it is dimension-free. 

\section{Conclusion and Further Remarks} \label{sec_conclusion}
We studied the tradeoff between model complexity and statistical risk in two-layer neural networks from the explicit regularization perspective. 
We end our paper with two future problems. First, in Theorem~\ref{thm_sparse}, For a small $d$, the order of $n^{-1/3}$ seems to be an artifact resulting from our technical arguments. We conjecture that in the small $d$ regime, this risk bound could be improved to $O(n^{-1/2})$ by certain adaptive regularizations.   
Second, it would be interesting to emulate the current approach to yield similarly tight risk bounds for deep forward neural networks.

\section*{Acknowledgement}

The authors thank Yuhong Yang from the University of Minnesota for his comments in improving the paper.

\bibliographystyle{abbrvnat}
\bibliography{NN.bib}        

\begin{thebibliography}{30}
\providecommand{\natexlab}[1]{#1}
\providecommand{\url}[1]{\texttt{#1}}
\expandafter\ifx\csname urlstyle\endcsname\relax
  \providecommand{\doi}[1]{doi: #1}\else
  \providecommand{\doi}{doi: \begingroup \urlstyle{rm}\Url}\fi

\bibitem[Anandkumar et~al.(2014)Anandkumar, Ge, Hsu, Kakade, and
  Telgarsky]{anandkumar2014tensor}
A.~Anandkumar, R.~Ge, D.~Hsu, S.~M. Kakade, and M.~Telgarsky.
\newblock Tensor decompositions for learning latent variable models.
\newblock \emph{J. Mach. Learn. Res.}, 15:\penalty0 2773--2832, 2014.

\bibitem[Barron(1993)]{barron1993universal}
A.~R. Barron.
\newblock Universal approximation bounds for superpositions of a sigmoidal
  function.
\newblock \emph{IEEE Trans. Inf. Theory}, 39\penalty0 (3):\penalty0 930--945,
  1993.

\bibitem[Barron(1994)]{barron1994approximation}
A.~R. Barron.
\newblock Approximation and estimation bounds for artificial neural networks.
\newblock \emph{Machine learning}, 14\penalty0 (1):\penalty0 115--133, 1994.

\bibitem[Barron and Klusowski(2019)]{barron2019complexity}
A.~R. Barron and J.~M. Klusowski.
\newblock Complexity, statistical risk, and metric entropy of deep nets using
  total path variation.
\newblock \emph{arXiv preprint arXiv:1902.00800}, 2019.

\bibitem[Bauer and Kohler(2019)]{bauer2019deep}
B.~Bauer and M.~Kohler.
\newblock On deep learning as a remedy for the curse of dimensionality in
  nonparametric regression.
\newblock \emph{Ann. Stat.}, 47\penalty0 (4):\penalty0 2261--2285, 2019.

\bibitem[Cheng et~al.(2017)Cheng, Wang, Zhou, and Zhang]{cheng2017survey}
Y.~Cheng, D.~Wang, P.~Zhou, and T.~Zhang.
\newblock A survey of model compression and acceleration for deep neural
  networks.
\newblock \emph{arXiv preprint arXiv:1710.09282}, 2017.

\bibitem[Cybenko(1989)]{cybenko1989approximations}
G.~Cybenko.
\newblock Approximations by superpositions of a sigmoidal function.
\newblock \emph{Math. Control Signals Syst.}, 2:\penalty0 183--192, 1989.

\bibitem[Diao et~al.(2019)Diao, Ding, and Tarokh]{DingRRNN}
E.~Diao, J.~Ding, and V.~Tarokh.
\newblock Restricted recurrent neural networks.
\newblock \emph{2019 IEEE Conf. on Big Data}, 2019.

\bibitem[Ding et~al.(2018)Ding, Tarokh, and Yang]{DingOverview}
J.~Ding, V.~Tarokh, and Y.~Yang.
\newblock Model selection techniques: An overview.
\newblock \emph{IEEE Signal Process. Mag.}, 35\penalty0 (6):\penalty0 16--34,
  2018.

\bibitem[Ge et~al.(2017)Ge, Lee, and Ma]{ge2017learning}
R.~Ge, J.~D. Lee, and T.~Ma.
\newblock Learning one-hidden-layer neural networks with landscape design.
\newblock \emph{arXiv preprint arXiv:1711.00501}, 2017.

\bibitem[Golowich et~al.(2017)Golowich, Rakhlin, and Shamir]{golowich2017size}
N.~Golowich, A.~Rakhlin, and O.~Shamir.
\newblock Size-independent sample complexity of neural networks.
\newblock \emph{arXiv preprint arXiv:1712.06541}, 2017.

\bibitem[Google(2016)]{abadi2016tensorflow}
R.~T. Google.
\newblock Tensorflow: A system for large-scale machine learning.
\newblock \emph{Proc. 12th Symp. Operating Syst. Des. Implementation}, pages
  265--283, 2016.

\bibitem[Gulli and Pal(2017)]{gulli2017deep}
A.~Gulli and S.~Pal.
\newblock \emph{Deep Learning with Keras}.
\newblock Packt Publishing Ltd, 2017.

\bibitem[Han et~al.(2015)Han, Pool, Tran, and Dally]{han2015learning}
S.~Han, J.~Pool, J.~Tran, and W.~Dally.
\newblock Learning both weights and connections for efficient neural network.
\newblock \emph{Advance. Neural Inf. Process. Sys.}, pages 1135--1143, 2015.

\bibitem[Hastie et~al.(2009)Hastie, Tibshirani, and
  Friedman]{hastie2009elements}
T.~Hastie, R.~Tibshirani, and J.~Friedman.
\newblock \emph{The elements of statistical learning: data mining, inference,
  and prediction}.
\newblock Springer Science \& Business Media, 2009.

\bibitem[Janzamin et~al.(2015)Janzamin, Sedghi, and
  Anandkumar]{janzamin2015beating}
M.~Janzamin, H.~Sedghi, and A.~Anandkumar.
\newblock Beating the perils of non-convexity: Guaranteed training of neural
  networks using tensor methods.
\newblock \emph{arXiv preprint arXiv:1506.08473}, 2015.

\bibitem[Ketkar(2017)]{ketkar2017introduction}
N.~Ketkar.
\newblock Introduction to pytorch.
\newblock \emph{Deep learning with python}, pages 195--208, 2017.

\bibitem[Mondelli and Montanari(2018)]{mondelli2018connection}
M.~Mondelli and A.~Montanari.
\newblock On the connection between learning two-layers neural networks and
  tensor decomposition.
\newblock \emph{arXiv preprint arXiv:1802.07301}, 2018.

\bibitem[Neyshabur et~al.(2015)Neyshabur, Tomioka, and
  Srebro]{neyshabur2015norm}
B.~Neyshabur, R.~Tomioka, and N.~Srebro.
\newblock Norm-based capacity control in neural networks.
\newblock \emph{Conf. Learning Theory}, pages 1376--1401, 2015.

\bibitem[Pedregosa et~al.(2017)Pedregosa, Bach, and
  Gramfort]{pedregosa2017consistency}
F.~Pedregosa, F.~Bach, and A.~Gramfort.
\newblock On the consistency of ordinal regression methods.
\newblock \emph{J. Mach. Learn. Res.}, 18\penalty0 (1):\penalty0 1769--1803,
  2017.

\bibitem[Scardapane et~al.(2017)Scardapane, Comminiello, Hussain, and
  Uncini]{scardapane2017group}
S.~Scardapane, D.~Comminiello, A.~Hussain, and A.~Uncini.
\newblock Group sparse regularization for deep neural networks.
\newblock \emph{Neurocomputing}, 241:\penalty0 81--89, 2017.

\bibitem[Schmidt-Hieber(2017)]{schmidt2017nonparametric}
J.~Schmidt-Hieber.
\newblock Nonparametric regression using deep neural networks with relu
  activation function.
\newblock \emph{arXiv preprint arXiv:1708.06633}, 2017.

\bibitem[Seber and Lee(2012)]{seber2012linear}
G.~A. Seber and A.~J. Lee.
\newblock \emph{Linear regression analysis}, volume 329.
\newblock John Wiley \& Sons, 2012.

\bibitem[Shi et~al.(2015)Shi, Chen, Wang, Yeung, Wong, and
  Woo]{xingjian2015convolutional}
X.~Shi, Z.~Chen, H.~Wang, D.-Y. Yeung, W.-k. Wong, and W.-c. Woo.
\newblock Convolutional {LSTM} network: A machine learning approach for
  precipitation nowcasting.
\newblock In \emph{Advance. Neural Inf. Process. Sys.}, pages 802--810, 2015.

\bibitem[Wen et~al.(2016)Wen, Wu, Wang, Chen, and Li]{wen2016learning}
W.~Wen, C.~Wu, Y.~Wang, Y.~Chen, and H.~Li.
\newblock Learning structured sparsity in deep neural networks.
\newblock \emph{Advance. Neural Inf. Process. Sys.}, pages 2074--2082, 2016.

\bibitem[Yang and Barron(1999)]{yang1999information}
Y.~Yang and A.~Barron.
\newblock Information-theoretic determination of minimax rates of convergence.
\newblock \emph{Ann. Stat.}, pages 1564--1599, 1999.

\bibitem[Yuan and Lin(2006)]{yuan2006model}
M.~Yuan and Y.~Lin.
\newblock Model selection and estimation in regression with grouped variables.
\newblock \emph{J. R. Stat. Soc. Ser. B Methodol.}, 68\penalty0 (1):\penalty0
  49--67, 2006.

\bibitem[Zhang et~al.(2016)Zhang, Bengio, Hardt, Recht, and
  Vinyals]{zhang2016understanding}
C.~Zhang, S.~Bengio, M.~Hardt, B.~Recht, and O.~Vinyals.
\newblock Understanding deep learning requires rethinking generalization.
\newblock \emph{arXiv preprint arXiv:1611.03530}, 2016.

\bibitem[Zhao et~al.(2016)Zhao, Gallo, Frosio, and Kautz]{zhao2016loss}
H.~Zhao, O.~Gallo, I.~Frosio, and J.~Kautz.
\newblock Loss functions for image restoration with neural networks.
\newblock \emph{IEEE Trans. Comput.}, 3\penalty0 (1):\penalty0 47--57, 2016.

\bibitem[Zhao et~al.(2015)Zhao, Hu, and Wang]{zhao2015heterogeneous}
L.~Zhao, Q.~Hu, and W.~Wang.
\newblock Heterogeneous feature selection with multi-modal deep neural networks
  and sparse group {LASSO}.
\newblock \emph{IEEE Trans. Multimed.}, 17\penalty0 (11):\penalty0 1936--1948,
  2015.

\end{thebibliography}

\appendix
\section{Appendix}

\vspace{-0.1cm}
\subsection{Proof of Lemma~\robustrefLemComplexity}
\label{robustrefLemComplexity}
\vspace{-0.1cm}

We first prove \eqref{eq:dim-norm},
which uses an amalgamation of dimension-based and norm-based analysis.
For the output layer, we use the following norm-based analysis 
\begin{align}
&\E \sup_{f \in \mathcal{F}_V} \biggl|\frac1n \sum_{i = 1}^n \xi_i f(\vz_i)\biggr| = \E \sup_{f \in \mathcal{F}_V}|\langle \va, \frac1n \sum_{i = 1}^n \xi_i \sigma(\vW^{\top}\vz_i + \vb) \rangle| \label{new1} \\
&\le \sup \|\va\|_1 \E \sup_{f \in \mathcal{F}_V}\biggl\|\frac1n \sum_{i = 1}^n \xi_i \sigma(\vW^{\top}\vz_i + \vb) \biggr\|_{\infty} 
\le V\E \sup_{f \in \mathcal{F}_V} \max_j \biggl|\frac1n \sum_{i = 1}^n \xi_i \sigma(\vw_j^{\top}\vz_i + b_j)\biggr| \nonumber \\
&\le V\E \sup_{\vw \in \mathbb{R}^d} \biggl|\frac1n \sum_{i = 1}^n \xi_i \sigma(\vw^{\top}\vz_i + b)\biggr|. \nonumber 
\end{align}
For notational convenience, we define $\vw_0 = 0, b_0 = 0$,
and $a_0 = \sigma(0)^{-1}a_0 \sigma(\vw_0^{\top}\vz + b_0)$
so that $a_0$ can be treated in a similar manner as other $a_i$'s. Without loss of generality,  we do not separately consider $a_0$ in the following proofs.

Next, we prove that 
\begin{align}
  \E \sup_{\vw \in \mathbb{R}^d} \biggl|\frac1n \sum_{i = 1}^n \xi_i \sigma(\vw^{\top}\vz_i + b)\biggr| 
\lesssim \sqrt{\frac{d\log n}{n}},  \label{eq_j2} 
\end{align}
and thus conclude the proof.
The proof will be based on an $\varepsilon$-net argument together with the union bound.
For any $\varepsilon$, let $W_{\varepsilon} \subset \mathbb{R}^d$ denote the subset
\begin{equation*}
W_{\varepsilon} = \biggl\{\vw = \frac{\varepsilon}{2d}(i_1, i_2, \ldots, i_d) : i_j \in \mathbb{Z}, \|\vw\|_1 \le \eta_n \biggr\}.
\end{equation*}
Then, for any $\vw, b$, there exists some element $\hat{\vw} \in W_{\varepsilon}$ such that 
\begin{align*}
\sup_{\vz \in \mathbb{X}} |\sigma(\vw^{\top}\vz + b) - \sigma(\hat{\vw}^{\top}\vz + \hat{b})| 
&\le \sup_{\vz} |(\vw^{\top}\vz + b) - (\hat{\vw}^{\top}\vz + \hat{b})| 
\le \sup_{\vz} |(\vw - \hat{\vw})^{\top}\vz| + |b - \hat{b}| \\
&\le \|\vw - \hat{\vw}\|_1 \sup_{\vz}\|\vz\|_{\infty} + |b - \hat{b}| \le \varepsilon,
\end{align*}
where 
$\hat{b} = (\varepsilon/2d) \, \lfloor(2db/\varepsilon)\rfloor$ and $\lfloor\cdot \rfloor$ is the floor function.
By Bernstein's Inequality, for any $\vw, b$, 
\begin{equation*}
\pr\biggl(|\frac1n \sum_{i = 1}^n \xi_i \sigma(\vw^{\top}\vz_i + b)| > t\biggr) \le 2\exp\biggl\{-\frac{nt^2}{2(1+t/3)}\biggr\}.
\end{equation*}
By taking the union bound over $W_{\varepsilon}$, and use the fact that $\log \textrm{card}(W_{\varepsilon}) \lesssim d\log(nd/\varepsilon)$, we obtain 
\begin{equation*}
\sup_{\vw \in \mathbb{R}^d} \biggl|\frac1n \sum_{i = 1}^n \xi_i \sigma(\vw^{\top}\vz_i + b)\biggr| \lesssim \varepsilon + \sqrt{\frac{d}{n}\log \frac{nd}{\varepsilon}\log \frac{1}{\delta}},
\end{equation*}
with probability at least $1 - \delta$.
Then the desired result is obtained by taking $\varepsilon \sim \sqrt{(d\log n)/n}$.


\vspace{-0.1cm}
\subsection{Proof of Theorem~\robustrefThmLout}
\label{robustrefThmLout}
 \vspace{-0.1cm}
 
The proof is based on the following contraction lemma used in~\citep{neyshabur2015norm}.

\begin{lemma}[Contraction Lemma] \label{lem:contract}
Suppose that $g$ is $L$-Lipschitz and $g(0) = 0$.
Then for any function class $\mathcal{F}$ mapping from $\mathbb{X}$ to $\mathbb{R}$ and any set $\{\vx_1, \vx_2, \ldots, \vx_n\}$, we have
\begin{equation}
\E \sup_{f \in \mathcal{F}}\biggl|\frac1n \sum_{i = 1}^n \xi_i g(f(\vx_i))\biggr| \le 2L \E \sup_{f \in \mathcal{F}}\biggl|\frac1n \sum_{i = 1}^n \xi_i f(\vx_i)\biggr|.
\end{equation}
\end{lemma}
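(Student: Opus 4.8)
The statement is the classical Rademacher contraction (Ledoux–Talagrand) inequality, specialized to real-valued classes, with absolute values and constant $2L$. My plan is to split the argument into a \emph{one-sided} comparison, which is the technical core, and a \emph{symmetrization} step that restores the absolute values at the cost of a factor $2$. As a preliminary reduction I would normalize $L=1$: replacing $g$ by $g/L$ rescales both sides by $1/L$, so it suffices to treat a $1$-Lipschitz $g$ with $g(0)=0$. The factor $1/n$ is inert and I keep it implicit, working directly with the vectors $(f(\vx_1),\dots,f(\vx_n))$ as $f$ ranges over $\mathcal{F}$.

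The heart of the proof is the one-sided bound
$$\E \sup_{f\in\mathcal{F}}\sum_{i=1}^n \xi_i g(f(\vx_i)) \le \E\sup_{f\in\mathcal{F}}\sum_{i=1}^n \xi_i f(\vx_i),$$
which I would establish by peeling off one Rademacher variable at a time. Conditioning on $\xi_1,\dots,\xi_{n-1}$ and averaging over $\xi_n\in\{\pm1\}$, it suffices to show, for every frozen functional $u(f):=\sum_{i<n}\xi_i g(f(\vx_i))$, that
$$\tfrac12\sup_{f}\big(u(f)+g(f(\vx_n))\big)+\tfrac12\sup_{f'}\big(u(f')-g(f'(\vx_n))\big)$$
is dominated by the same quantity with $g(\cdot)$ removed. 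Choosing near-maximizers $f,f'$ of the two suprema and assuming without loss of generality $f(\vx_n)\ge f'(\vx_n)$, the $1$-Lipschitz property gives $g(f(\vx_n))-g(f'(\vx_n))\le f(\vx_n)-f'(\vx_n)$, so the paired sum is at most $\sup_{f}(u(f)+f(\vx_n))+\sup_{f'}(u(f')-f'(\vx_n))$. Iterating over all $n$ coordinates yields the displayed one-sided inequality.

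To restore the absolute values I use $g(0)=0$: appending the identically-zero function to $\mathcal{F}$ (call the enlarged class $\mathcal{F}_0$) changes neither $\sup_{\mathcal{F}}|\,\cdot\,|$ nor the right-hand side, yet forces every one-sided supremum over $\mathcal{F}_0$ to be nonnegative. Using the inequality $\sup_{\mathcal{F}}|\,\cdot\,|\le\sup_{\mathcal{F}_0}(\cdot)+\sup_{\mathcal{F}_0}(-\,\cdot)$, which is valid precisely because the appended zero function keeps both suprema nonnegative, together with the symmetry $-\xi\overset{d}{=}\xi$, I obtain
$$\E\sup_{f\in\mathcal{F}}\Big|\tfrac1n\sum_{i=1}^n\xi_i g(f(\vx_i))\Big|\le 2\,\E\sup_{f\in\mathcal{F}_0}\tfrac1n\sum_{i=1}^n\xi_i g(f(\vx_i)).$$
Applying the one-sided bound of the previous step to $\mathcal{F}_0$, then passing back to the absolute value and to $\mathcal{F}$, and finally undoing the normalization, delivers the constant $2L$.

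The step I expect to be the main obstacle is the single-variable comparison: interchanging the supremum with the Lipschitz estimate must be done by selecting a pair of witnesses $f,f'$ and comparing them after the order reduction $f(\vx_n)\ge f'(\vx_n)$, which is delicate because the two suprema are generally attained at different functions and may in fact not be attained at all. I would handle this rigorously by choosing $\varepsilon$-approximate maximizers and letting $\varepsilon\to0$. Everything else—the rescaling, the symmetrization, and the appending of the zero function—is routine once this comparison is in place.
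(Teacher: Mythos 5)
Your proof is correct, but there is nothing in the paper to compare it against: the paper does not prove this lemma at all, importing it as a known result (the Ledoux--Talagrand contraction inequality, as used in \citet{neyshabur2015norm}). What you have reconstructed is precisely the canonical proof of that imported result: the one-sided comparison by peeling off one Rademacher variable at a time, followed by the symmetrization via the appended zero function, which is where both hypotheses earn their keep --- the Lipschitz property drives the single-coordinate exchange, and $g(0)=0$ makes the enlarged class $\mathcal{F}_0$ cost nothing on either side while forcing the one-sided suprema to be nonnegative, so that $\sup|\cdot|\le\sup(\cdot)+\sup(-\,\cdot)$ and the symmetry $-\xi\overset{d}{=}\xi$ yield the factor $2$. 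One small formal point: your ``without loss of generality $f(\vx_n)\ge f'(\vx_n)$'' is not a genuine symmetry, since the two suprema carry opposite signs of $g$; it should be a two-case analysis, but both cases close because the right-hand side $\sup_h\bigl(u(h)+h(\vx_n)\bigr)+\sup_h\bigl(u(h)-h(\vx_n)\bigr)$ is symmetric under swapping the pairing, so in the case $f(\vx_n)<f'(\vx_n)$ you simply match $f$ with the $-h(\vx_n)$ supremum and $f'$ with the $+h(\vx_n)$ one. Your use of $\varepsilon$-approximate maximizers correctly handles non-attainment, and the iteration is valid because at each stage the frozen functional $u$ may mix $g$-transformed and identity coordinates without affecting the argument. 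In short: the proposal is a complete and correct proof of a statement the paper takes on citation, which is strictly more than the paper provides.
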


With the above lemma, we have the following result. 

\begin{lemma} \label{lem:general_risk}
The constrained $L_1$ estimator $\hat{f}_n$ over $\mathcal{F}$ satisfies
\begin{equation}
\R(\hat{f}_n) \le \min_{f \in \mathcal{F}} \E |f(\vx) - f_*(\vx)| + 2\E \sup_{f \in \mathcal{F}}|\frac1n \sum_{i = 1}^n \xi_i f(\vz_i)| + 2\sqrt{\frac{\E y^2}{n}}.
\end{equation}
\end{lemma}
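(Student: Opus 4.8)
The plan is to prove a standard oracle inequality: decompose the excess risk into an approximation term and an estimation (generalization) term, bound the latter by a uniform deviation over $\mathcal{F}$, and convert that deviation into the Rademacher complexity of $\mathcal{F}$ itself by symmetrization followed by the contraction lemma (Lemma~\ref{lem:contract}).

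First I would introduce the shorthand $R(f) \de \E \l(y, f(\vx))$ and the empirical risk $\hat R_n(f) \de n^{-1}\sum_{i=1}^n \l(y_i, f(\vx_i))$, so that $\R(f) = R(f) - \E|\varepsilon|$. Fix a comparator $\tilde f \in \argmin_{f \in \mathcal{F}} \E|f(\vx) - f_*(\vx)|$. Since $\hat f_n$ minimizes $\hat R_n$ over $\mathcal{F}$ and $\tilde f \in \mathcal{F}$, we have $\hat R_n(\hat f_n) \le \hat R_n(\tilde f)$, and inserting this gives
\begin{align*}
\R(\hat f_n) = \bigl(R(\hat f_n) - \hat R_n(\hat f_n)\bigr) + \bigl(\hat R_n(\hat f_n) - \hat R_n(\tilde f)\bigr) + \bigl(\hat R_n(\tilde f) - R(\tilde f)\bigr) + \bigl(R(\tilde f) - \E|\varepsilon|\bigr).
\end{align*}
The second bracket is nonpositive by optimality, and the third has mean zero since $\tilde f$ is non-random; so after taking expectation over the sample only the first and fourth brackets survive. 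For the fourth, $R(\tilde f) - \E|\varepsilon| = \R(\tilde f)$, and the reverse triangle inequality $\bigl||y - \tilde f(\vx)| - |y - f_*(\vx)|\bigr| \le |\tilde f(\vx) - f_*(\vx)|$ yields $\R(\tilde f) \le \E|\tilde f(\vx) - f_*(\vx)| = \min_{f \in \mathcal{F}} \E|f(\vx) - f_*(\vx)|$, producing the first term of the bound. For the first bracket I would pass to the uniform deviation $\E \sup_{f \in \mathcal{F}} \bigl(R(f) - \hat R_n(f)\bigr)$ and symmetrize with an independent ghost sample, which introduces the Rademacher signs $\xi_i$ and bounds it by a constant multiple of $\E \sup_{f \in \mathcal{F}} |n^{-1}\sum_i \xi_i \l(y_i, f(\vx_i))|$, the Rademacher complexity of the loss-composed class.

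The main obstacle is that Lemma~\ref{lem:contract} requires the scalar map to vanish at the origin, whereas $t \mapsto \l(y_i, t) = |y_i - t|$ has $\l(y_i, 0) = |y_i| \neq 0$. I would resolve this by centering: write $\l(y_i, f(\vx_i)) = \bigl(|y_i - f(\vx_i)| - |y_i|\bigr) + |y_i|$, so that, conditionally on $\{y_i\}$, the map $t \mapsto |y_i - t| - |y_i|$ is $1$-Lipschitz and vanishes at $t = 0$; applying the contraction lemma index-wise bounds the centered part by $\E \sup_{f \in \mathcal{F}} |n^{-1}\sum_i \xi_i f(\vx_i)|$, which is exactly the Rademacher complexity of $\mathcal{F}$. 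The leftover $n^{-1}\sum_i \xi_i |y_i|$ is free of $f$, so it escapes the supremum, and bounding $\E|n^{-1}\sum_i \xi_i |y_i||$ via Jensen/Cauchy--Schwarz together with the independence and symmetry of the $\xi_i$ gives $(\E y^2 / n)^{1/2}$ — precisely the source of the third term. Collecting the three contributions and tracking the constants through the symmetrization and contraction steps yields the claimed inequality.
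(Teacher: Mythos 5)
Your proposal is correct and takes essentially the same route as the paper's own proof: the ERM basic inequality against a fixed comparator (yielding the approximation term $\min_{f \in \mathcal{F}} \E|f(\vx) - f_*(\vx)|$ via the triangle inequality), symmetrization with a ghost sample to introduce the $\xi_i$, the contraction lemma to strip the absolute-value loss, and Jensen/Cauchy--Schwarz on the $f$-free part to produce the $\sqrt{\E y^2/n}$ term. The only noteworthy difference is how contraction is invoked: you center the loss and apply it index-wise with $g_i(t) = |y_i - t| - |y_i|$, which requires the (standard, Ledoux--Talagrand) coordinate-dependent form of the contraction principle rather than Lemma~\ref{lem:contract} as literally stated, whereas the paper stays within its stated lemma by composing the single function $g(u) = |u|$ with the class $(\vz,\varepsilon) \mapsto f(\vz) - f_*(\vz) - \varepsilon$, i.e., absorbing the response into the function's input point---both are valid, and both (like the paper's own proof) are loose by a factor of $2$ in the constants, which is immaterial since the lemma is only used downstream up to constants.
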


\begin{proof}
Define the empirical risk as:
\begin{equation}
\R_n(f) = \E \biggl( \frac1n \sum_{i = 1}^n |f_*(\vx_i) + \varepsilon_i - f(\vx_i)| \biggr) - \E |\varepsilon|.
\end{equation}
Since $\hat{f}_n$ minimizes $n^{-1} \sum_{i = 1}^n |f_*(\vx_i) + \varepsilon_i - f(\vx_i)|$ in $\mathcal{F}$,
we have
\begin{equation}
\R(\hat{f}_n) \le \R(\hat{f}_n) - \{\R_n(\hat{f}_n) - \R_n(\hat{f})\} = \{\R(\hat{f}_n) - \R_n(\hat{f}_n)\} + \R_n(f_0), \label{new11}
\end{equation}
where $f_0 = \argmin_{f \in \mathcal{F}} \R(f)$.
We also have
\begin{align}
\R_n(f_0) =
\R(f_0) = 
\min_{f \in \mathcal{F}} \E(|f_*(\vx) + \varepsilon - f(\vx_i)| - |\varepsilon|) \le \min_{f \in \mathcal{F}} \E |f(\vx) - f_*(\vx)|.
\label{eq4}
\end{align} 
In the following, we will analyze the term $\R(\hat{f}_n) - \R_n(\hat{f}_n)$ in (\ref{new11}). Let $\vz_i$'s denote independent and identically distributed copies of $\vx_i$'s.
\begin{align*}
\R(\hat{f}_n) - \R_n(\hat{f}_n) =&~\E \frac1n \sum_{i = 1}^n \biggl\{|\hat{f}_n(\vz_i) - f_*(\vz_i) - \varepsilon_i| - |\hat{f}_n(\vx_i) - f_*(\vx_i) - \varepsilon_i|\biggr\} \\
\le&~\E \sup_{f \in \mathcal{F}}\frac1n \sum_{i = 1}^n \biggl\{|f(\vz_i) - f_*(\vz_i) - \varepsilon_i| - |f(\vx_i) - f_*(\vx_i) - \varepsilon_i|\biggr\}  \\
\le&~2\E \sup_{f \in \mathcal{F}}\frac1n \sum_{i = 1}^n \xi_i |f(\vz_i) - f_*(\vz_i) - \varepsilon_i|,
\end{align*}
where $\xi_1, \ldots, \xi_n$ are independent and identically distributed symmetric Bernoulli random variables that are independent with $\vz_i$'s.
According to Lemma~\ref{lem:contract}, 
since $g(x) = |x|$ is $1$-Lipschitz and $g(0) = 0$,
we have
\begin{align*}
\E \sup_{f \in \mathcal{F}}\frac1n \sum_{i = 1}^n \xi_i |f(\vz_i) - f_*(\vz_i) - \varepsilon_i| 
\le&~2\E \sup_{f \in \mathcal{F}}|\frac1n \sum_{i = 1}^n \xi_i (f(\vz_i) - f_*(\vz_i) - \varepsilon_i)| \\
\le&~2\E \sup_{f \in \mathcal{F}}\biggl|\frac1n \sum_{i = 1}^n \xi_i f(\vz_i)\biggr| + 2\sqrt{\frac{\E y^2}{n}}.
\end{align*}

Combining this and (\ref{eq4}), we conclude the proof of Lemma~\ref{lem:general_risk}.
\end{proof}

\textbf{Proof of Theorem \ref{thm:l1out}}. The proof of (\ref{eq:l1risk}) is a direct consequence of  Lemma \ref{lem:complexity}, Lemma \ref{lem:general_risk},  Theorem~\ref{thm:approximation} and the fact that the first moment is no more than the second moment.
The proof of (\ref{eq5}) follows from the fact that 
$\delta(\eta)\rightarrow 0$ as $\eta \rightarrow \infty$.

\vspace{-0.1cm}
\subsection{Proof of Theorem~\robustrefThmMinimax}
\label{robustrefThmMinimax}
\vspace{-0.1cm}

Define a subclass of $\mathcal{F}_V$ by
\begin{align*}
\mathcal{F}_0 = \biggl\{f : \mathbb{R}^d \to \mathbb{R} \Bl f(\vx) = V\sigma(\vw^{\top}\vx), \|\vw\|_2 = 1\biggr\}.
\end{align*}
In the following, we will prove the minimax bound for $\mathcal{F}_V$ by analyzing $\mathcal{F}_0$.
Notice that
\begin{align*}
\E |\sigma(\vw_1^{\top}\vx) - \sigma(\vw_2^{\top}\vx)| 
&\ge \E \inf_{u} \sigma'(u) \cdot |\vw_1^{\top}\vx - \vw_2^{\top}\vx| \cdot \mathbb{I}(\vw_1^{\top}\vx, \vw_2^{\top}\vx \in \mathcal{S}) 
\gtrsim \|\vw_1 - \vw_2\|_2.
\end{align*}
Let $M_1(\varepsilon)$ denote the packing $\varepsilon$-entropy 
of $\mathcal{F}_0$ with $L_1$ distance, 
then $M_1(\varepsilon)$ is greater than the packing $\varepsilon$-entropy of $\mathbb{B}_1^d$ with $L_2$ distance,
which means
$
M_1(\varepsilon) \gtrsim d.
$
Let $V_k(\varepsilon)$ denote the covering $\varepsilon$-entropy of $\mathcal{F}_0$ with the square root Kullback-Leibler divergence,
then according to its relation with the $L_2$ distance shown in~\citep{yang1999information},
we have
\begin{align}
V_k(\varepsilon) \le M_2(\sqrt{2}\varepsilon) \lesssim d\log \frac{1}{\varepsilon}, \nonumber
\end{align}
where $M_2(\varepsilon)$ denote the packing $\varepsilon$-entropy of $\mathcal{F}_V$ with $L_2$ loss function.
The second inequality is proved in a similar way to the proof of Lemma~\ref{lem:complexity},
which is omitted here for brevity.
Hence, according to~\citep[Theorem 1]{yang1999information},
\begin{align*}
\inf_{\hat{f}_n}\sup_{f \in \mathcal{F}_V} \R(\hat{f}_n(x)) \ge \inf_{\hat{f}_n}\sup_{f \in \mathcal{F}_0} \R(\hat{f}_n(x)) \gtrsim V\sqrt{\frac{d}{n}},
\end{align*}
This concludes the proof.

\vspace{-0.1cm}
\subsection{Proof of Proposition~\robustrefPropSparse}
 \label{robustrefPropSparse}
 \vspace{-0.1cm}
  
To prove the proposition,
it is sufficient to verify the following Rademacher complexity bound
\begin{align*}
\E \sup \biggl|\frac1n \sum_{i = 1}^n \xi_i \sigma(\vw^{\top}\vz_i + b)\biggr| \lesssim \sqrt{k\log d\log n},
\end{align*}
which can be derived easily by adjusting the proof in Lemma \ref{lem:complexity}.
Then the result follows with a similar analysis as in Theorem \ref{thm:l1out}.

\vspace{-0.1cm}
\subsection{Proof of Theorem~\robustrefThmSparset}
\label{robustrefThmSparset}
\vspace{-0.1cm}  
  
It can be verified from the identity (\ref{new1}) that
\begin{equation}
\E \sup_{f \in \mathcal{F}_V} \biggl|\frac1n \sum_{i = 1}^n \xi_i f(x_i)\biggr| \le \sum_{j = 0}^r \E \sup_{f \in \mathcal{F}_V}|a_j| \biggl|\frac1n \sum_{i = 1}^n \xi_i \sigma(w_j^{\top}x_i + b_j)\biggr|. \label{new2}
\end{equation}
Then according to Lemma~\ref{lem:contract}, we have 
\begin{equation}
\E \sup_{f \in \mathcal{F}_V} \biggl|\frac1n \sum_{i = 1}^n \xi_i \sigma(w_j^{\top}x_i + b_j)\biggr| \lesssim \sqrt{\frac{\log n}{n}}(\|\vw_j\|_{\mathbb{X}} + |b_j|). \label{new3}
\end{equation}
Combining (\ref{new2}) and (\ref{new3}), we obtain the following lemma that may be interesting on its own right. 

\begin{lemma}\label{lem:small}
We have
\begin{align}
\E \sup_{f \in \mathcal{F}_V}\biggl|\frac1n \sum_{i = 1}^n \xi_i f(\vx_i)\biggr| 
&\lesssim \sqrt{\frac{\log n}{n}}\sum_{j = 0}^r|a_j|(\|\vw_j\|_{\mathbb{X}} + |b_j|) 
\lesssim V\sqrt{\frac{\log n}{n}}\max_{j} \|\vw_j\|_{\mathbb{X}}.\nonumber
\end{align}
\end{lemma}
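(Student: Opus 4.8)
The plan is to bound the Rademacher complexity of the whole two-layer class by a sum of single-neuron complexities, exploiting that the output layer acts linearly in $\va$, and then to control each single-neuron term through its input-weight scale $\|\vw_j\|_{\mathbb{X}}+|b_j|$ rather than through the ambient dimension $d$. Concretely, I would first establish the decomposition (\ref{new2}), then the per-neuron bound (\ref{new3}), and finally recombine using the constraint $\|\va\|_1\le V$.

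For the decomposition, I start from the identity (\ref{new1}), which writes each network output as $\langle\va,\,\tfrac1n\sum_i\xi_i\sigma(\vW^\top\vz_i+\vb)\rangle$. Applying the triangle inequality coordinatewise in $\va$ gives $|\langle\va,\cdot\rangle|\le\sum_{j=0}^r|a_j|\,\bigl|\tfrac1n\sum_i\xi_i\sigma(\vw_j^\top\vz_i+b_j)\bigr|$, and the subadditivity of $\sup$ and of $\E$ then yields (\ref{new2}). The convention $\vw_0=0,b_0=0$ introduced after (\ref{new1}) lets the intercept $a_0$ be treated as an ordinary neuron, so the sum may run from $j=0$.

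For the per-neuron bound (\ref{new3}), I would apply the contraction Lemma~\ref{lem:contract} to peel off the activation: since $\sigma$ is $L$-Lipschitz by Assumption~\ref{ass_activation}, after centering $\tilde\sigma=\sigma-\sigma(0)$ so that $\tilde\sigma(0)=0$ (the constant $\sigma(0)$ contributing only an $O(1/\sqrt n)$ term via $\sigma(0)\tfrac1n\sum_i\xi_i$), the complexity of the neuron class is controlled by that of the linear class $\{\vz\mapsto\vw_j^\top\vz+b_j\}$. I would then bound this linear Rademacher complexity by $\sqrt{(\log n)/n}\,(\|\vw_j\|_{\mathbb{X}}+|b_j|)$: the pre-activation is bounded in magnitude by $\|\vw_j\|_{\mathbb{X}}+|b_j|$ on $\mathbb{X}$, so a concentration estimate for a fixed weight gives the $1/\sqrt n$ scale with the norm as amplitude, while a union bound over a net whose cardinality stays polynomial in $n$ — controlled through the norm budget and Assumption~\ref{ass_inner} ($\log\eta=O(\log n)$) rather than through $d$ — supplies the extra $\sqrt{\log n}$ factor.

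Finally, combining (\ref{new2}) and (\ref{new3}) gives the first claimed inequality $\lesssim\sqrt{(\log n)/n}\sum_{j=0}^r|a_j|(\|\vw_j\|_{\mathbb{X}}+|b_j|)$; bounding $\sum_j|a_j|(\cdot)\le\|\va\|_1\max_j(\cdot)\le V\max_j(\|\vw_j\|_{\mathbb{X}}+|b_j|)$ and absorbing the bias contribution into the same weight scale (both being controlled by the joint norm budget of the class) yields the second inequality $\lesssim V\sqrt{(\log n)/n}\max_j\|\vw_j\|_{\mathbb{X}}$. The main obstacle is precisely the per-neuron step (\ref{new3}): obtaining a norm-based control of the single-neuron complexity whose only dimension penalty is logarithmic, i.e. verifying that the net needed after contraction can be taken of size polynomial in $n$ (via the norm budget and Assumption~\ref{ass_inner}) so that no polynomial factor in $d$ appears. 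This is exactly what renders the resulting bound dimension-free and distinguishes it from the dimension-based estimate of Lemma~\ref{lem:complexity}.
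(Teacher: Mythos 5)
Your proposal follows the same route as the paper's own proof: the decomposition (\ref{new2}) obtained from the identity (\ref{new1}) via the triangle inequality and subadditivity of the supremum and expectation, the per-neuron bound (\ref{new3}) obtained by peeling off $\sigma$ with the contraction Lemma~\ref{lem:contract}, and recombination through $\|\va\|_1\le V$. Your handling of the intercept (treating $a_0$ as a neuron with $\vw_0=0$, $b_0=0$) and the centering $\tilde\sigma=\sigma-\sigma(0)$ with its $O(n^{-1/2})$ cost are consistent with, and indeed more careful than, the paper's exposition.

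The one place where you commit to an explicit mechanism that the paper leaves implicit---and where that mechanism fails---is the justification of (\ref{new3}). After contraction one faces the linear class $\{\vz\mapsto \vw^{\top}\vz+b\}$ under a norm budget, and you propose to finish with ``a union bound over a net whose cardinality stays polynomial in $n$, controlled through the norm budget and Assumption~\ref{ass_inner} rather than through $d$.'' No such net exists in general: any $\varepsilon$-net of the parameter ball $\{(\vw,b):\|\vw\|_1+|b|\le\eta\}$ that controls the induced functions uniformly on $\mathbb{X}$ has log-cardinality proportional to $d$ (this is precisely the dimension-based route of Lemma~\ref{lem:complexity}, which the present lemma is meant to avoid), and Assumption~\ref{ass_inner} constrains only the magnitude $\eta$, not the number of net points needed in $\mathbb{R}^d$. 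The standard norm-based way to finish is instead H\"older duality: $\sup_{\|\vw\|_1+|b|\le\eta}\bigl|\frac1n\sum_{i=1}^n\xi_i(\vw^{\top}\vz_i+b)\bigr| = \eta\max\bigl\{\bigl\|\frac1n\sum_{i=1}^n\xi_i\vz_i\bigr\|_{\infty},\,\bigl|\frac1n\sum_{i=1}^n\xi_i\bigr|\bigr\}$, followed by Hoeffding and a union bound over the $d+1$ coordinates, which yields $\lesssim \eta M\sqrt{(\log d)/n}$. Note that this carries $\sqrt{\log d}$, not $\sqrt{\log n}$; the two agree only when $\log d=O(\log n)$, and the discrepancy is not cosmetic: if $\log d\gtrsim n$ (e.g., data on the hypercube whose coordinates realize all $2^n$ sign patterns), the single-neuron Rademacher complexity is of constant order, so a bound of the form $\eta\sqrt{(\log n)/n}$ is unobtainable and a polynomial-in-$n$ net cannot exist. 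To be fair, the paper's own proof of (\ref{new3}) is a one-line appeal to Lemma~\ref{lem:contract} and is silent on exactly this step (as it is on dropping $|b_j|$ from the final $\max_j$ bound, which you also gloss over by ``absorbing''); your reconstruction is faithful in structure, but the obstacle you correctly single out as the main one is not closed by the argument you sketch.
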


Since $\|\vw\|_{\mathbb{X}} \lesssim \|\vw\|_1$ and $\{\vw : \|\vw\|_{\mathbb{X}} \lesssim \eta\} \subset \{\vw : \|\vw\|_1 \lesssim \eta\}$, the $\|\cdot\|_{\mathbb{X}}$ can be replaced with $\|\cdot\|_1$ in the bounds in Lemmas \ref{lem:small} and \ref{lem:general_risk}.
Then, with a similar argument as in the proof of Theorem~\ref{thm:l1out}, we conclude the proof of Theorem~\ref{thm_sparse}.

\end{document}